\title{
Optimal PAC Bounds Without Uniform Convergence
}
\author{Ishaq Aden-Ali\thanks{Department of Electrical Engineering and Computer Science, UC Berkeley. Email: {adenali@berkeley.edu, yeshwanth@berkeley.edu, shetty@berkeley.edu}} \and Yeshwanth Cherapanamjeri\footnotemark[1] \and Abhishek Shetty\footnotemark[1] \and Nikita Zhivotovskiy\thanks{Department of Statistics, UC Berkeley. Email: zhivotovskiy@berkeley.edu}}
\date{\today}
\begin{document}
\maketitle
\begin{abstract}
In statistical learning theory, determining the sample complexity of realizable binary classification for VC classes was a long-standing open problem.
The results of Simon \cite{simon2015almost} and Hanneke  \cite{hanneke2016optimal} established sharp upper bounds in this setting. 
However, the reliance of their argument on the uniform convergence principle limits its applicability to more general learning settings such as multiclass classification.
In this paper, we address this issue by providing optimal high probability risk bounds through a framework that surpasses the limitations of uniform convergence arguments.

Our framework converts the leave-one-out error of permutation invariant predictors into high probability risk bounds.
As an application, by adapting the one-inclusion graph algorithm of Haussler, Littlestone, and Warmuth \cite{haussler1994predicting}, we propose an algorithm that achieves an optimal PAC bound for binary classification.
Specifically, our result shows that certain aggregations of one-inclusion graph algorithms are optimal, addressing a variant of a classic question posed by Warmuth \cite{warmuth2004optimal}.

We further instantiate our framework in three settings where uniform convergence is provably suboptimal.
For multiclass classification, we prove an optimal risk bound that scales with the one-inclusion hypergraph density of the class, addressing the suboptimality of the analysis of Daniely and Shalev-Shwartz \cite{daniely2014optimal}.
For partial hypothesis classification, we determine the optimal sample complexity bound, resolving a question posed by Alon, Hanneke, Holzman, and Moran \cite{alon2022theory}.
For realizable bounded regression with absolute loss, we derive an optimal risk bound that relies on a modified version of the scale-sensitive dimension, refining the results of Bartlett and Long \cite{bartlett1998prediction}.
Our rates surpass standard uniform convergence-based results due to the smaller complexity measure in our risk bound.
\end{abstract}

\section{Introduction}\label{sec:intro}

The study of the statistical complexity of prediction is a central question in statistical learning theory.
In the simplest setting of realizable prediction, the predication task is as follows: for an instance space $\mc{X}$, label space $\mc{Y}$ and hypothesis class $\mc{H} \subseteq \mc{Y}^{\mc{X}}$, one is given access to $n$ i.i.d.\ labelled training points $S = ((X_1,Y_1), \dots, (X_n,Y_n)) \in (\mc{X} \times \mc{Y})^n$ drawn from a distribution $P$ with the promise that there exists a target hypothesis $f^* \in \mc{H}$ that perfectly labels the data. We refer to $S$ as the \emph{training sample}.
The goal is to use the training sample $S$ to design a predictor, denoted by $\wh{f} (\cdot; S)$, with small prediction error on a held-out sample drawn from the same distribution (which we refer to as the \emph{risk}) with high probability over the samples. Concretely, we would like to minimize
\begin{equation*}
    \E_{(X, Y) \thicksim P} \lsrs{\ell (\wh{f} (X; S), Y)}
\end{equation*}
for a suitable loss function $\ell: \mc{Y} \times \mc{Y} \to [0, 1]$ with probability at least $1 - \delta$ over $S$.

A vast body of literature is dedicated to characterizing the optimal achievable risk as a function of the number data points $n$, the failure probability $\delta$, and the complexity of hypothesis class $\mc{H}$. Even for the simplest setting of binary classification, where $\mc{Y} = \{0, 1\}$, $\ell(\wh{y}, y) = \bm{1} \lbrb{\wh{y} \neq y}$ and the complexity of the function class is captured by its VC-dimension $d$, this remained a challenging open problem. In this specific setting, seminal early works \cite{vapnik1968algorithms, vapnik74theory, blumer1989learnability} established that an Empirical Risk Minimizer (ERM), i.e.,\ any predictor that minimizes the loss on the training samples, achieves the risk bound\footnote{Throughout the paper we use $\log$ to denote the natural logarithm.} 
\begin{equation*}
    \E_{(X, Y) \thicksim P} \lsrs{\ell (\wh{f}_{\mrm{ERM}} (X; S), Y)} \leq C \lprp{\frac{d}{n} \log \lprp{\frac{n}{d}} + \frac{1}{n}\log \lprp{\frac{1}{\delta}}}.
\end{equation*}
High probability risk bounds of this form are usually referred to as Probably Approximately Correct (PAC) bounds, a notion introduced in the classical paper of Valiant \cite{valiant1984theory}.

Meanwhile, a complementary line of work in the \emph{transductive} setting and the related \emph{prediction model of learning} \cite{haussler1994predicting}, led to the development of predictors with strong leave-one-out (LOO) performance with direct implications for the distributional setting. 
In a prototypical example of these settings, there is a fixed realizable sample $S = \lprp{(x_i, y_i)}_{i = 1}^n$ and the learner is given access to the labels $y_i$ of a randomly chosen $n - 1$ of them with the task of predicting accurately on the remaining point. Denoting by $\Si{} = \lprp{(x_j, y_j)}_{j \neq i}$, the goal is to design a predictor $\wh{f}$ which minimizes the LOO error
\begin{equation*}
    \sum_{i = 1}^n \ell \lprp{\wh{f}\lprp{x_i; \Si{}}, y_i}.
\end{equation*}
For binary classification, the famous one-inclusion graph strategy of Haussler, Littlestone, and Warmuth \cite{Haussler1988, haussler1994predicting} achieves LOO error at most $d$.\footnote{Their bound is optimal even up to the leading constant \cite{li2001one}.}
An exchangeability argument yields a predictor $\wh{f}_{\mrm{OIG}}$ with the bound
\begin{equation*}
    \E_{\substack{ (X, Y) \thicksim P \\ S \sim P^n }} \lsrs{\ell \lprp{\wh{f}_{\mrm{OIG}} (X; S), Y}} \leq \frac{d}{n}
\end{equation*}
for the distributional setting \emph{in-expectation} over the training samples. 
Interestingly, this approach eliminates the extra logarithmic factor typically present in bounds for empirical risk minimization, leading to the conjecture that the logarithmic factor might not be necessary, even in the PAC setting.
Further, it suggests that the one-inclusion graph predictor could be optimal \cite{warmuth2004optimal}. 
Since then, these predictors have been adapted for a variety of contexts where uniform convergence, a crucial aspect needed for analyzing ERM predictors, is suboptimal or fails to hold.
Examples include multiclass learning \cite{rubinstein2009shifting,simon2010one,daniely2014optimal}, partial hypothesis learning \cite{alon2022theory}, and bounded regression \cite{bartlett1998prediction}. 

In a breakthrough result, Hanneke \cite{hanneke2016optimal}, building upon the work of Simon \cite{simon2015almost}, devised an optimal PAC predictor $\wh{f}_{\mrm{OPT}}$ satisfying
\begin{equation*}
      \E_{(X, Y) \thicksim P} \lsrs{\ell (\wh{f}_{\mrm{OPT}} (X; S), Y)} \leq C \lprp{\frac{d}{n} + \frac{1}{n} \log \lprp{\frac{1}{\delta}}},
\end{equation*}
affirming the conjectured optimal rate.
Remarkably, the analysis does not leverage the optimal in-expectation performance of the leave-one-out predictors but instead considers a careful aggregation of suboptimal ERM predictors.
Unfortunately, the use of ERM precludes the application of this approach in settings where uniform convergence does not hold, but accurate prediction is nevertheless possible as evidenced by the success of leave-one-out predictors.

In this context, our main contribution (\cref{thm:main}) is a general technique that transforms a broad class of predictors with optimal leave-one-out performance, which satisfy strong \emph{in-expectation} guarantees, into ones with optimal PAC bounds. 
On a conceptual level, our work addresses the classical conjecture of Warmuth \cite{warmuth2004optimal} which hypothesized the optimal rate was achievable via the one-inclusion-graph algorithm. 
While the conjecture in the strict sense is false as there exist one-inclusion strategies which are optimal in-expectation but perform poorly in the PAC model \cite{adenali2022oneinclusion}, we show that a simple aggregation over one-inclusion strategies constructed on prefixes of the training data suffices to restore optimal performance in the PAC setting. 
This is formalized in our main result presented as \cref{thm:main}.

More concretely, our reliance on the existence of optimal leave-one-out predictors allows us to completely bypass uniform convergence, enabling applications to a range of settings where uniform convergence may not hold.
Consider the general realizable prediction setting with a predictor, $\wh{f}_{\mrm{LOO}}$, satisfying 
\begin{equation*}
    \sum_{i = 1}^n \ell \lprp{\wh{f}_{\mrm{LOO}} (x_i; \Si), y_i} \leq M_n
\end{equation*}
for any realizable sample $S$.
Our framework now yields a predictor, $\wh{f}_{\mrm{PAC}}$ satisfying
\begin{equation*}
    \E_{(X, Y) \thicksim P} \lsrs{\ell \lprp{\wh{f}_{\mrm{PAC}} \lprp{X; S}, Y}} \leq C \lprp{\frac{M_n}{n} + \frac{1}{n} \log \lprp{\frac{1}{\delta}}}.
\end{equation*}
for the distributional setting.

In several cases of interest, there exist leave-one-out predictors which achieve optimal in-expectation performance for the first term.
Our results show that these may be extended to the PAC setup paying only \emph{additively} in $\log (1 / \delta)$ which is also known to be optimal.
We now illustrate some applications of our result.
\begin{enumerate}
    \item \textbf{Multiclass Classification:} There exist leave-one-out predictors with error scaling with the graph density, $\dens{}$, of the associated one-inclusion hypergraph leading to optimal in-expectation performance. Our results now immediately yield an optimal PAC bound in this setting (\cref{thm:risk_multiclass}). Moreover, in the context of binary classification, we recover the optimal PAC bound of Hanneke \cite{hanneke2016optimal} with a simplified analysis, improving the constant factors by several orders of magnitude.
    \item \textbf{Learning Partial Hypotheses:} Here, the optimal bound on $M_n$ scales with the VC-dimension of the \emph{partial} hypothesis class, a generalization of the VC-dimension in the binary setting. Our result yields an optimal PAC bound answering an open problem in the literature \cite{alon2022theory}. See \cref{cor:risk_partial}.
    \item \textbf{Bounded Regression:} In the setting of bounded $\ell_1$ regression, we obtain the first predictor in the PAC setting whose error depends on the scale-sensitive complexity measure $\fatvf{\mc{H}} (\gamma)$ which in some cases represents a significant improvement over those obtained by the classic uniform convergence-based arguments which depend instead on the \emph{fat-shattering} dimension $ \fat{\mc{H}}(\gamma)$. 
    See \cref{ssec:app_regression,thm:loreg}.
\end{enumerate}

The remainder of the paper is organized as follows. 
The main results of our paper, including a formal description of our framework and its application to several settings of interest are discussed in \cref{sec:results}. We present related work in \cref{sec:related_work}. Subsequently, we prove our main result in \cref{sec:proof_main_result}.

\section{Main results}
\label{sec:results}

\subsection{Notation}
\label{ssec:notation}

We use $\mc{X}$ to denote our instance space, $\mc{Y}$ to denote the label space, and sometimes use $\mc{Z} = \mc{X} \times \mc{Y}$.
Let $\mc{U} = \bigcup_{n=1}^{\infty} \mc{Z}^n$ denote the set of possible observable training samples. A predictor $\widehat{f}: \mc{X} \times \mc{U} \to \mc{Y}$ takes as input a test point and training sample and outputs a prediction. For a training sample $S = ((X_1, Y_1), \dots, (X_n,Y_n))$, we will use $\Si$ to denote the training sample with the $i^{th}$ data point removed, i.e., $\Si = ((X_1, Y_1), \dots, (X_{i-1},Y_{i-1}), (X_{i+1},Y_{i+1}), \dots, (X_n,Y_n))$ for all $i \in [n]$. 
For $j\in \left[ n  \right]$, we will use $S_{\leq j}$ to denote the training sample 
$((X_1, Y_1), \dots, (X_j,Y_j))$.
We will also find it convenient at times to write $Z_i = (X_i, Y_i)$ for the $i^{th}$ point in a training sample $S$.
Throughout the paper, we will use the upper case $X$, $Y$ and $Z$ to denote random variables and lower case $x$, $y$ and $z$ to denote the realizations.

In the learning setting, the algorithm is given access to samples from an unknown distribution $P$ over $\mc{Z} = \mc{X} \times \mc{Y}$.
The goal of the learning algorithm is to produce a hypothesis $\widehat{f}$ that has low \emph{risk} under $P$ with respect to a \emph{loss function} $\ell : \mc{Y} \times \mc{Y} \to [0, 1]$, where we define the risk to be $\risk{\widehat{f}}{P} \coloneqq \E_{(X,Y) \sim P}[\ell(\widehat{f}(X), Y)]$.
Throughout this paper we will make the assumption that the unknown distribution $P$ is \emph{realizable} by $\mc{H}$.
This means there is some $f^* \in \mc{H}$ such that for $Y = f^*(X)$ almost surely.
Similarly, we say a training sample $S = ((x_1, y_1), \dots, (x_n,y_n))$ is realizable by $\mc{H}$ (or simply realizable) if there is some $f^* \in \mc{H}$ such that $f^*(x_i) = y_i$ for all $(x_i, y_i) \in S$.
Given a training sample $S = ((x_1,y_1),\dots,(x_n,y_n))$, we will find it convenient to define $\uniS \subseteq \mc{X}$ to be the \emph{set} of \emph{unique} $x_i$ terms in the training sample $S$.

\subsection{Assumptions}
\label{ssec:assumptions}

Our main result is applicable to a broad class of predictors satisfying two natural assumptions. 
The main assumption pertains to the \emph{leave-one-out} error.
As described in \cref{sec:intro}, the predictor is first evaluated on each point in the training sample with the predictor trained on the training sample with that point removed. The sum of these errors over all points in the training sample yields the leave-one-out error. 
We assume that the predictor has bounded leave-one-out error.  

\begin{assumption}
    \label{as:bdd_loo}  The predictor $\widehat{f}$ has bounded leave-one-out error for any realizable sample. Formally, for any realizable $S = \lprp{(x_i, y_i)}_{i = 1}^n$, we have
    \begin{equation*}
    \sum_{i = 1}^{n} \ell (\widehat{f}(x_i; \Si{}),y_i) \leq M_n.
    \end{equation*}
    Further, we will assume that $M_n$ is monotone in the sample size $n$, that is, $M_n \le M_{n+1}$ for all $n \in \mb{N}$.\footnote{This is largely without loss of generality since in most natural settings, we can replace $M_n $ with $\max_{i\leq n} M_i$ to enforce monotonicity.}
\end{assumption}

\begin{remark} \label{rem:loo_to_risk}  
    Note that by averaging over a sample $S$ of size $n$ drawn from $P$, the leave-one-out error from \cref{as:bdd_loo} translates to an expected risk bound of
    \begin{align}
        \E_{S \sim P^n } \risk{\widehat{f}\left(\cdot; S  \right)}{P}  \leq \frac{ M_{n+1} }{n+1}. 
    \end{align}
\end{remark}

The second assumption necessitates that our classifier is symmetric with respect to the training data.
This is intuitive in the batch setting where the training data is drawn i.i.d.\ from a distribution and is satisfied by numerous algorithms in both theoretical and practical contexts. 
We formally state it as the following assumption.

\begin{assumption}
    \label{as:perm_inv} The predictor $\widehat{f}$ is symmetric in the training data: for all $n \in \mb{N}$, for any permutation $\pi$, $S \in \mc{Z}^n, x \in \mc{X}$, we have
    \begin{equation*}
        \widehat{f}(x; S) = \widehat{f}(x; S_\pi),
    \end{equation*}
    where $S_\pi$ is the sample $S$ permuted by $\pi$.
\end{assumption}

The main motivation to consider these assumptions is the \emph{one-inclusion graph algorithm} which is symmetric and, as previously discussed, has optimal leave-one-out error in various settings.
For example, in the case of realizable binary classification with hypothesis classes with finite VC-dimension, this algorithm has $M_n$ that is independent of the sample size. We will discuss our applications subsequently and the one-inclusion graph algorithm in detail in \cref{app:oig}.

\subsection{Main result}
\label{ssec:main_result}
    We present our main result which concerns the performance of a sequence of hypotheses generated by training a predictor, $\wh{f}$, on prefixes of the training data, $(\Sli[t])_{t = 1}^n$. 
    When $\wh{f}$ is symmetric and has bounded leave-one-out error, our result asserts that the \emph{average} risk of the predictors in the \emph{suffix} of the sequence $(\widehat{f}(\cdot ; \Slt ))_{t = 1}^{n}$ is bounded by $M_n$, with high probability. 
    For the sake of simplicity, we assume for the rest of the paper that $n/4$ is an integer.

\begin{restatable}{thm}{main}\label{thm:main}
    Fix a hypothesis class $\mc{H} \subseteq \mc{Y}^{\mc{X}}$, loss function $\ell: \mc{Y} \times \mc{Y} \to [0, 1]$, and a predictor $\widehat{f} : \mc{X} \times \mc{U} \to \mc{Y}$ satisfying \cref{as:perm_inv,as:bdd_loo}. 
    Then, for any realizable distribution $P$ over $\mc{X} \times \mc{Y}$ and confidence parameter $\delta \in (0,1)$,
    given a training sample $S \sim P^n$ we have
    \begin{equation*}
        \frac{\splita }{\splitb n} \cdot \sum_{t = n / \splita}^{n-1} \risk{\widehat{f}(\cdot; \Slt{})}{P} \leq \cishaq \lprp{ \frac{M_n}{n} + \frac{1}{n}\log \left(\frac{2}{\delta}\right)},\label{eq:main_risk_bound}
    \end{equation*}
    with probability  at least $1-\delta$ over the randomness of $S$.
\end{restatable}

Our proof technique is based on \emph{online-to-batch conversion}, which transforms any online learning algorithm --- making $M$ mistakes on a finite sample --- into a batch algorithm utilizing the entire sample in a statistical setting.
A key technical aspect of our work is demonstrating that this technique remains applicable even if $M$ is a random variable dependent on the specific realization of the training sample.

Another technical element of our proof is the use of suffix averaging.
To illustrate this, consider an algorithm that, upon processing an i.i.d. realizable sample of size $t$, makes an error on a newly sampled point with probability at most $d/t$, where $d$ is, for example, the VC dimension. The expected total number of mistakes for a sample of size $n$ is then bounded by $\sum\nolimits_{t = 1}^n d/t = O(d\log n)$, which cannot lead to the desired accuracy. 
Interestingly, when utilizing only the suffix of the sample, we have $\sum\nolimits_{t = n/2}^n d/t = O(d)$, effectively eliminating the logarithmic factor. Using martingale arguments, we essentially convert the last computation into a high probability risk bound that scales as
\[
O\left(\frac{d}{n} + \frac{1}{n}\log\left(\frac{1}{\delta}\right)\right).
\] 
Remarkably, the two observations discussed above emerged almost simultaneously in the late 1980s. Haussler, Littlestone, and Warmuth \cite{Haussler1988} discovered that suffix averaging does not introduce logarithmic factors in sequential errors, while Littlestone established optimal online-to-batch conversions for algorithms with a deterministic number of errors \cite{Littlestone89}. \cref{thm:main} connects these two insights.

\subsection{Classification}
\label{ssec:app_classification}
The first major application of \cref{thm:main} is to the setting of multiclass classification. 
In this setting, we assume the label space $\mc{Y}$ is finite and we take the loss function to be the $0/1$ loss. In this section, we use $\err{f}{P}$ to denote the prediction risk of $f$.
Before we state the main result, we introduce \emph{the one-inclusion hypergraph density} which is the central complexity measure we use.
To do so, we introduce some basic definitions and notation.

Let $G = (V,E)$ be a hypergraph. 
Given a subset of vertices $U \subseteq V$, we define the induced hypergraph $G[U]$ to be the hypergraph with vertex set $U$ and edge set $E' = \{e\cap U: e \in E,\ |e \cap U| \ge 1\}$.
For a hypergraph $G = (V,E)$, we define the density of $G$ to be 
\[
\operatorname{Dens}(G) = \frac{1}{|V|}\sum_{e \in E}\left(|e|-1\right).
\]
We define the maximum density of $G$ to be
\[
\mu(G) = \max_{U \subseteq V} \  \operatorname{Dens}(G[U]).
\]
In words, $\mu(G)$ is the largest density of an induced hypergraph of $G$.
We will also find it useful to define the average degree of a hypergraph $G = (V,E)$ to be
\[
\avgd(G) = \frac{ 1 }{|V|}\sum_{e \in E : |e| > 1}|e|.
\]

Given a hypothesis class $\mc{H} \subseteq \mc{Y}^{\mc{X}}$ and subset $\sub = \{x_1, \dots, x_n\} $ of the instance space $\mc{X}$, we define the projection of $\mc{H}$ onto $\sub$ to be $\mc{H}|_{\sub} = \{(f(x_1) , \dots, f(x_n)) : f \in \mc{H} \}$.
For a projection $\mc{H}|_{\sub}$ onto an $n$-element set $U$, we will often write $f(i) = f(x_i)$ where $i \in [n]$.

We can now define one-inclusion hypergraphs which were first studied in the setting of binary classification by Haussler, Littlestone, and Warmuth~\cite{haussler1994predicting},\footnote{In the binary setting, the one-inclusion hypergraph is simply a graph.} and generalized to the multiclass setting by Rubinstein, Bartlett, and Rubinstein~\cite{rubinstein2009shifting}.
Daniely and Shalev-Shwartz~\cite{daniely2014optimal} initiated the study of these objects for infinite $\mc{Y}$.
\begin{definition}[One-inclusion hypergraph]
Fix a hypothesis class $\mc{H} \subseteq \mc{Y}^{\mc{X}}$ and an $n$-element subset of the domain $\sub \subseteq \mc{X}$.
The one inclusion hypergraph $G(\mc{H}|_{\sub}) = (V,E)$ has its vertex set as $V \coloneqq \mc{H}|_{\sub}$.
For every $i \in [n]$ and $f : [n] \setminus \{i\} \to \mc{Y}$, let $e_{i,f}$ be the set of all $f' \in \mc{H}|_{\sub}$ such that $f'(j) = f(j)$ for all $j \in [n] \setminus \{i\}$.
The edge set of the one-inclusion graph $G(\mc{H}|_{\sub})$ is given by
\[
E = \{ e_{i,f} : i \in [n],\ f : [n] \setminus \{i\} \to \mc{Y}, \  |e_{i,f}| > 0\}.
\]
In words, we create a hyperedge for all vertices (projected hypotheses) that are the same in all but one coordinate $i \in [n]$, and we do this for every coordinate.
\end{definition}
The one-inclusion hypergraph density of a hypothesis class $\mc{H} \subseteq \mc{Y}^{\mc{X}}$ is defined as follows.\footnote{Our definition of this complexity measure is slightly different than the original definition introduced by Daniely and Shalev-Shwartz \cite{daniely2014optimal}. In their paper, they defined this complexity measure using the \emph{average degree} instead of the \emph{edge density}, which differ by a factor of at most $2$. Their definition is also more general and handles infinite $\mc{Y}$.}
\begin{definition}[One-inclusion hypergraph density of $\mc{H}$~\cite{daniely2014optimal}]\label{def:graph_dens}
Fix a hypothesis class $\mc{H} \subseteq \mc{Y}^{\mc{X}}$.
We define the \emph{one-inclusion hypergraph density} of the class $\mc{H}$ for the sample size $n$ to be
\[
\dens = \max_{S \in \mc{Z}^n} \ \mu(G(\mc{H}|_{\uniS})).
\]
In words, $\dens$ is the largest maximum density of any one-inclusion graph formed from $\mc{H}$ and a training sample $S$ of size $n$ (possibly containing repetitions).
\end{definition}

Haussler, Littlestone, and Warmuth~\cite{haussler1994predicting} defined the one-inclusion \emph{graph} algorithm for the setting of binary classification, and Rubinstein, Bartlett, and Rubinstein~\cite{rubinstein2009shifting} generalized this algorithm to the multiclass setting and dubbed it the one-inclusion \emph{hypergraph} algorithm. 
The one-inclusion hypergraph algorithm satisfies \cref{as:perm_inv,as:bdd_loo}, and it is the predictor that we use in our results.
The details of this algorithm are not essential to prove the new results in this paper, however we include a description in \cref{app:oig} for the interested reader.
Finally, we note that the one-inclusion hypergraph algorithm collapses to the original one-inclusion graph algorithm when $\mc{Y} = \{0,1\}$.

Our main PAC risk bound for multiclass classification scales with $\dens$.
\begin{restatable}{thm}{riskmulticlass}\label{thm:risk_multiclass}
Fix a hypothesis class $\mc{H} \subseteq \mc{Y}^{\mc{X}}$. 
There is a predictor $\wh{f} : \mc{X} \times \mc{U} \to \mc{Y}$ which, for any $\delta \in (0, 1)$, and $S \ts P^n$ sampled from any realizable distribution $P$, satisfies
\[
\err{\widehat{f}(\cdot ; S)}{P} \le \twocishaq\left(\frac{\lceil\dens\rceil}{n}+\frac{1}{n}\log\left(\frac{2}{\delta}\right)\right),
\]
with probability at least $1-\delta$ over the randomness of $S$.
\end{restatable}
We include the proof of this result in \cref{app:oig}. 
This bound provides a sharp characterization of the prediction risk in terms of $\dens$, boosting the \emph{optimal in-expectation} risk bound of Daniely and Shalev-Shwartz~\cite{daniely2014optimal} to the canonical PAC setup. In particular, our optimal PAC bound does not exhibit an \emph{explicit} dependence on the number of classes which is \emph{not} the case for the bounds achieved by ERM classifiers based on uniform convergence. We refer to Section \ref{sec:related_work} for additional details.

A characterization of learnability with respect to the sequence $(\dens)_{n \in \mathbb{N}}$ is a bit unsatisfying due to its implicit dependence on $n$.
This prompted Daniely and Shalev-Shwartz to investigate whether there is a \emph{single number} that can replace this sequence of complexity measures in these risk bound.
They proposed a notion of a dimension which was later named the Daniely Shalev-Shwartz (DS) dimension \cite{brukhim2022characterization}.\footnote{This dimension was originally defined to allow for infinite $\mc{Y}$.}
\begin{definition}[DS dimension \cite{daniely2014optimal}]
Fix a hypothesis class $\mc{H} \subseteq \mc{Y}^{\mc{X}}$. The \emph{Daniely Shalev-Shwartz dimension} of $\mc{H}$, denoted $\DS$, is the largest integer $d$ such that there is a subset $\mc{H}' \subseteq \mc{H}$ satisfying
\[
\max_{S \in \mc{Z}^d}\avgd(G(\mc{H}'|_{\uniS})) = d.
\]
\end{definition}
To better appreciate this definition, recall the definition of the VC dimension in the binary setting.
Fix a binary hypothesis class $\mc{H} \subseteq \{0,1\}^{\mc{X}}$.
We say an $n$-element set $\sub \subseteq \mc{X}$ is \emph{shattered} by $\mc{H}$ if $ \mc{H}|_{\sub} = \{0,1\}^n$.
The VC dimension of $\mc{H}$ is the largest integer $d$ such that there exists a $d$-element subset of $\mc{X}$ that is shattered by $\mc{H}$.
When $\mc{Y} = \{0,1\}$, the DS dimension is equivalent to the VC dimension.
Daniely and Shalev-Shwartz made the following conjecture that relates the DS dimension to the sequence $(\dens)_{n \in \mathbb{N}}$.
\begin{conjecture}[\cite{daniely2014optimal}]\label{eq:dsconjecture}
Fix a hypothesis class $\mc{H} \subseteq \mc{Y}^{\mc{X}}$ with DS dimension $\DS$.
There is an absolute constant $c > 0$ such that for any $n \ge \DS$ we have
\[
    \dens \le c\cdot d_{\operatorname{DS}}.
\]
\end{conjecture}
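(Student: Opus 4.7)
The plan is to prove the conjecture by upgrading a local degree bound on the one-inclusion hypergraph into a combinatorial richness statement that forces a large DS-shattered substructure. Fix a sample $S$ of size $n \geq d_{\operatorname{DS}}$ and an induced subgraph on $U \subseteq \mc{H}|_{\uniS}$ that achieves the maximum density $D = \dens$. By averaging over vertices of $U$, some $v \in U$ has weighted degree at least $D$, so there exist at least $D$ coordinates $i$ along which $U$ contains another hypothesis differing from $v$ only at coordinate $i$. The goal is to convert this local abundance of ``disagreement directions'' into $\Omega(D)$ coordinates on which $\mc{H}$ realizes a pseudo-cube in the sense of Brukhim, Carmon, Dinur, Moran, and Yehudayoff, since the existence of such a pseudo-cube of size $k$ certifies $d_{\operatorname{DS}} \geq k$.

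The extraction I would attempt is iterative. Starting from $v$, select one of its disagreement coordinates $i_1$ and a partner $v'$ agreeing with $v$ off $i_1$; then restrict attention to the induced hypergraph on the remaining coordinates and look for a high-degree vertex in this restriction. If one could maintain density $\Omega(D)$ through $\Omega(D)$ rounds of peeling, the collected coordinates $\{i_1, i_2, \ldots\}$ together with the chosen partners would exhibit the pseudo-cube structure certifying $d_{\operatorname{DS}} = \Omega(D)$. To control the peeling, I would interleave it with a Sauer-Shelah-type bound for the DS dimension (the polynomial-growth inequality implicit in the Brukhim et al. characterization of multiclass learnability) in order to argue that the set of low-degree vertices cannot absorb too much of the edge mass.

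The main obstacle is precisely this persistence of density under peeling. In the binary case, the analogous $\dens \leq d_{\operatorname{VC}}$ bound follows from a short averaging argument applied to Sauer-Shelah, because in $\{0,1\}$ each coordinate admits at most one alternative label and density collapses to a pure counting quantity. In the multiclass setting, coordinates that are rich at one vertex may become degenerate on the restriction, and the multiplicity of labels prevents a direct averaging from controlling edges by vertices. A plausible workaround is to combine the peeling with the online-to-batch machinery of \cref{thm:main}: a density lower bound $\dens \gg d_{\operatorname{DS}}$ would, through \cref{thm:risk_multiclass}, force high-probability risk lower bounds for the one-inclusion algorithm; contrasting these with the risk upper bounds obtainable from list-PAC learners of size $O(d_{\operatorname{DS}})$ would ideally yield a contradiction. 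Making this quantitative — in particular, matching the logarithmic and failure-probability dependences on both sides — is where I expect the real work to lie, and is the reason the conjecture has remained open.
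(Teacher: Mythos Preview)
The statement you are attempting to prove is labeled a \emph{conjecture} in the paper and is not proved there; it is attributed to Daniely and Shalev-Shwartz and explicitly discussed as open, with the paper noting only that a positive resolution would sharpen the Brukhim--Carmon--Dinur--Moran--Yehudayoff bound. There is therefore no ``paper's own proof'' to compare against, and your proposal is an attack on an open problem rather than a reconstruction of a known argument.

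On the content of your sketch: the peeling idea is natural but, as you yourself concede, the persistence of density under coordinate restriction is precisely the obstruction, and nothing in the outline addresses it. Your fallback route through \cref{thm:risk_multiclass} contains a genuine error of direction. That theorem gives an \emph{upper} bound on risk in terms of $\dens$; a large value of $\dens$ makes the upper bound weak, it does not ``force high-probability risk lower bounds.'' To run a contradiction argument you would need a matching \emph{lower} bound showing that some realizable distribution forces risk $\Omega(\dens/n)$ for every learner, and then contrast it with an upper bound of $O(d_{\operatorname{DS}}/n)$ from a DS-based learner. The first ingredient is essentially the in-expectation optimality of the one-inclusion strategy, but the second is exactly the conjecture restated in sample-complexity form: the best known DS-based rate is $\widetilde{O}(d_{\operatorname{DS}}^{3/2}/n)$, so the comparison would only yield $\dens = \widetilde{O}(d_{\operatorname{DS}}^{3/2})$, not the linear bound. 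Your proposed contradiction is circular.
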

A positive resolution of this combinatorial conjecture would imply, via \cref{thm:risk_multiclass}, the PAC bound 
\[
  \err{\widehat{f}}{P} = O\left(\frac{d_{\operatorname{DS}}}{n}+\frac{1}{n}\log\left(\frac{1}{\delta}\right)\right),
\]
which would significantly improve upon the recent work of Brukhim, Carmon, Dinur, Moran, and Yehudayoff~\cite{brukhim2022characterization} who proved the bound\footnote{Here, the tilde notation hides poly-logarithmic factors in the parameters $n$ and $\DS$.}
\[
\widetilde{O}\left(\frac{d_{\operatorname{DS}}^{3/2}}{n} + \frac{1}{n}\log\left(\frac{1}{\delta}\right)\right).
\]

The main motivation for \cref{eq:dsconjecture} 
is an elegant result of Haussler, Littlestone, and Warmuth \cite{haussler1994predicting} (see also \cite{haussler1995sphere}) which shows that for every \emph{binary} hypothesis class $\mc{H}$ with VC dimension $d$, the one-inclusion hypergraph density is at most $d$.
\begin{theorem}[\cite{haussler1994predicting, haussler1995sphere}]
    \label{thm:hlw_density}
Fix a hypothesis class $\mc{H}\subseteq \{0,1\}^\mc{X}$ with VC dimension $d$. For every $n \ge 1$ we have
    \[
        \dens \leq d.
    \]
\end{theorem}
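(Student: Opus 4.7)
The plan is to reduce the theorem to the sharper statement that for every $V \subseteq \{0,1\}^n$ with $\operatorname{VC}(V) \le d$, the one-inclusion graph on $V$ contains at most $d \cdot |V|$ Hamming-distance-$1$ pairs. This reduction is immediate: for any $U \subseteq V = \mc{H}|_{\uniS}$, each hyperedge of the induced sub-hypergraph $G[U]$ has size at most $2$, so $|e|-1 \in \{0,1\}$ and $\operatorname{Dens}(G[U])$ equals the number of Hamming-distance-$1$ pairs in $U$ divided by $|U|$. Since $U$ inherits the bound $\operatorname{VC}(U) \le d$, the sharper statement gives $\operatorname{Dens}(G[U]) \le d$ for every induced subgraph, and hence $\mu(G(\mc{H}|_{\uniS})) \le d$ and $\dens \le d$.

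To prove the sharper statement, I would use the classical \emph{shifting} (compression) operation. For each coordinate $i \in [n]$, define $s_i(V)$ by replacing every $h \in V$ with $h(i) = 1$ by $h^{(i)}$ (the element obtained by flipping the $i$-th coordinate to $0$) whenever $h^{(i)} \notin V$, and keeping $h$ otherwise. The three standard facts to verify are that $|s_i(V)| = |V|$, that $\operatorname{VC}(s_i(V)) \le \operatorname{VC}(V)$, and that the number of Hamming-distance-$1$ pairs in $s_i(V)$ is at least the number in $V$. Iterating $s_i$ across all coordinates terminates because $\sum_{h} |\operatorname{supp}(h)|$ is a non-negative integer that strictly decreases at every nontrivial step, and the stable output $V^\star \subseteq \{0,1\}^n$ is \emph{downward-closed} and satisfies $|V^\star| = |V|$, $\operatorname{VC}(V^\star) \le d$, and $|E(V^\star)| \ge |E(V)|$.

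The final step is to bound $|E(V^\star)|$ for a downward-closed class. Orient each edge $\{h, h^{(i)}\}$ toward the endpoint with $h(i) = 1$. Under downward-closedness, for any $h \in V^\star$ and any $i \in \operatorname{supp}(h)$ we have $h^{(i)} \le h$ coordinate-wise and therefore $h^{(i)} \in V^\star$, so the number of edges oriented toward $h$ equals $|\operatorname{supp}(h)|$. Downward-closedness also forces the entire sub-cube $\{h' : h' \le h\}$ to lie inside $V^\star$, which means $V^\star$ shatters $\operatorname{supp}(h)$ and hence $|\operatorname{supp}(h)| \le d$. Summing yields $|E(V^\star)| \le d|V^\star|$, and the chain $|E(V)| \le |E(V^\star)| \le d|V^\star| = d|V|$ closes the argument. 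The main obstacle I anticipate is the edge-preservation step of shifting: one must check that for every direction $j \ne i$, every Hamming-distance-$1$ pair in direction $j$ present in $V$ survives in $s_i(V)$. This requires a short case analysis on whether each endpoint is relocated by $s_i$, but is routine; VC-preservation under shifting and the downward-closed counting step are both standard.
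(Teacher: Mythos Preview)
The paper does not supply its own proof of this theorem; it merely cites it as a classical result of Haussler, Littlestone, and Warmuth \cite{haussler1994predicting,haussler1995sphere} and uses it as a black box. Your proposal is a correct and complete rendition of the standard shifting (down-compression) argument from those references: the reduction to bounding Hamming-$1$ pairs in an arbitrary $U\subseteq\{0,1\}^n$ with $\operatorname{VC}(U)\le d$ is exactly right in the binary case, and the three shifting invariants together with the downward-closed endgame are stated and justified correctly. The edge-preservation step you flag is indeed the only place requiring care, and the $2\times 2$ square case analysis (over the four completions on coordinates $i,j$) handles it cleanly.
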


In fact, using \cref{thm:risk_multiclass,thm:hlw_density} in the binary setting, we can get a sharp PAC bound that scales with a single parameter, the VC dimension.  
\begin{corollary}\label{cor:risk_binary}
Fix a hypothesis class $\mc{H} \subseteq \{0,1\}^{\mc{X}}$ with VC dimension $d$. 
There is a predictor $\wh{f} : \mc{X} \times \mc{U} \to \{0,1\}$ which, for any $\delta \in (0, 1)$, and $S \ts P^n$ sampled from any realizable distribution $P$, satisfies
\[
\err{\widehat{f}(\cdot ; S)}{P}  \le \twocishaq\left(\frac{d}{n}+\frac{1}{n}\log\left(\frac{2}{\delta}\right)\right),
\]
with probability at least $1-\delta$ over the randomness of $S$.
\end{corollary}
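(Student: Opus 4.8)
The plan is to obtain this as an immediate consequence of \cref{thm:risk_multiclass} together with the density bound of Haussler, Littlestone, and Warmuth (\cref{thm:hlw_density}). First I would specialize \cref{thm:risk_multiclass} to the binary label space $\mc{Y} = \{0,1\}$. In this regime the one-inclusion hypergraph algorithm collapses to the original one-inclusion graph algorithm of \cite{haussler1994predicting}, so the predictor $\widehat{f}$ furnished by \cref{thm:risk_multiclass} is precisely the corresponding aggregation of one-inclusion graph predictors, and that theorem already guarantees, with probability at least $1-\delta$ over $S \sim P^n$,
\[
\err{\widehat{f}(\cdot; S)}{P} \le \twocishaq\left(\frac{\lceil \dens \rceil}{n} + \frac{1}{n}\log\left(\frac{2}{\delta}\right)\right).
\]

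It remains only to bound $\dens$ by the VC dimension. Since $\mc{H} \subseteq \{0,1\}^{\mc{X}}$ has VC dimension $d$, \cref{thm:hlw_density} gives $\dens \le d$ for every $n \ge 1$; because $d$ is a nonnegative integer, $\lceil \dens \rceil \le \lceil d \rceil = d$. Substituting this into the display above yields exactly the asserted bound.

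I expect no real obstacle at this stage: all of the substance lives upstream --- in \cref{thm:main} (the online-to-batch plus suffix-averaging argument), in the fact that the one-inclusion graph algorithm is permutation invariant and has leave-one-out error at most $d$ (so $M_n \le d$ in \cref{as:bdd_loo}), and in the combinatorial estimate of \cref{thm:hlw_density}. The one point worth a sentence is the ceiling: $\dens$ is an average of quantities of the form $|e|-1$ and need not itself be an integer, but the integrality of $d$ makes $\lceil \dens \rceil \le d$, so passing from \cref{thm:risk_multiclass} to \cref{cor:risk_binary} costs nothing in the constant. Alternatively one could give a self-contained derivation: apply \cref{thm:main} directly to the binary one-inclusion graph algorithm with $M_n = d$, then turn the suffix-average risk bound into a single-predictor bound via a held-out split or a median selection --- but this is exactly the reduction already carried out in the proof of \cref{thm:risk_multiclass}, so citing that result is the cleanest path.
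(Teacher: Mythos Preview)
Your proposal is correct and matches the paper's approach exactly: the paper states the corollary as an immediate consequence of \cref{thm:risk_multiclass} combined with \cref{thm:hlw_density}, with no additional argument. Your remark that $d$ is an integer so $\lceil \dens \rceil \le d$ is the only minor point worth making explicit, and you handle it cleanly.
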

\cref{cor:risk_binary} recovers the known optimal risk upper bound for binary classification first proven by Hanneke~\cite{hanneke2016optimal}.
Recently, Larsen~\cite{larsen2022bagging} showed that an implementation of the natural bagging heuristic also achieves an optimal risk bound.
Our proof of the optimal bound is remarkably simpler than both of their proofs.
Furthermore, both Hanneke and Larsen state that the constant factors in their upper bounds are very large and explicitly ask whether these constants can be reduced.
Our new analysis reduces the constant factors by a few orders of magnitude.

In addition, our result can be seen as a partial answer to a question of Warmuth \citep{warmuth2004optimal} who asked whether the one-inclusion graph algorithm can achieve an optimal PAC risk bound. 
Recent work \cite{adenali2022oneinclusion} shows that the conjecture is not true in the strict sense.
That is, the one-inclusion graph algorithm can have a large risk bound in the PAC setting.
Our result shows that, though the one inclusion predictor itself is not optimal, a simple aggregation of such predictors can achieve an optimal PAC bound.

The final classification setting we apply our result to is partial hypothesis classification.
In this setting, the label set is $\mc{Y} = \{0, 1, \star\}$ {where $\star$ label is interpreted as an ``I don't know'' response}.
The loss function $\ell$ is again the $0/1$ loss.
We can extend the notion of shattering and VC dimension used for binary hypothesis classes to partial hypothesis classes.
We say an $n$-element set $\sub = \{x_1, \dots, x_n\} \subseteq \mc{X}$ is \emph{shattered} by a partial hypothesis class $\mc{H} \subseteq \{0,1,\star\}^{\mc{X}}$ if $\{0,1\}^n \subseteq \mc{H}|_{\sub}$.
The VC dimension of a partial hypothesis class $\mc{H}$ is now defined as the largest integer $d$ such that there exists a $d$-element subset of $\mc{X}$ that is shattered by $\mc{H}$.\footnote{Note that the complexity of a class does not depend directly on $\star$ labels distinguishing this setting from just being an instance of the multiclass setting with three labels.}
We say that a distribution $P$ over $\mc{X} \times \{0,1\}$ (importantly, not over $\mc{X} \times \{0,1, \star\}$) is realizable by $\mc{H}$ if there is a target hypothesis $f^* \in \mc{H}$ such that $\err{f^*}{P} = 0$. Note that this definition is slightly different from the definition of the realizability we used above.

In this setting, the predictor we use is a simple modification of the one-inclusion hypergraph algorithm to exclude the $\star$ labels.
One can show, as in the binary classification setting (\cref{thm:hlw_density}), that the one-inclusion hypergraph density $\dens$ of $\mc{H}$ is at most $d$, the VC dimension of $\mc{H}$.
This leads to the following risk bound which implies a sample complexity upper bound that matches the lower bound in \cite{alon2022theory} up to universal constants.
This resolves a question posed in \cite{alon2022theory}.\footnote{The definition of realizable learning used in \cite{alon2022theory} is more general than the standard notion that we use here. This difference is immaterial in our analysis and our bounds hold under their more general definition as well.}
We elaborate more on this simple modification of the one-inclusion hypergraph in \cref{app:oig_partial}.
\begin{corollary}\label{cor:risk_partial}
Fix a partial hypothesis class $\mc{H} \subseteq \{0,1,\star\}^{\mc{X}}$ with VC dimension $d$.
There is a predictor $\wh{f} : \mc{X} \times \mc{U} \to \{0,1,\star\}$ which, for any $\delta \in (0, 1)$, and $S \ts P^n$ sampled from a realizable distribution $P$, satisfies
\[
\err{\widehat{f}(\cdot ; S)}{P} \le \twocishaq\left(\frac{d}{n}+\frac{1}{n}\log\left(\frac{2}{\delta}\right)\right) ,
\]
with probability at least $1-\delta$ over the randomness of $S$.
\end{corollary}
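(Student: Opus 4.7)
The plan is to apply Theorem \ref{thm:risk_multiclass} (and hence Theorem \ref{thm:main}) by exhibiting a symmetric predictor for the partial class whose leave-one-out error on realizable samples is bounded by $d$. The pipeline mirrors the derivation of Corollary \ref{cor:risk_binary} from Theorem \ref{thm:risk_multiclass} and Theorem \ref{thm:hlw_density}: produce a one-inclusion style predictor, verify Assumptions \ref{as:perm_inv} and \ref{as:bdd_loo}, and invoke the general framework.

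First, I would construct a modified one-inclusion hypergraph algorithm tailored to partial classes. Given a projection $\mc{H}|_{\sub}$, the vertex set is defined as usual, but edges are oriented using only the subhypergraph induced by vertices whose label at the query coordinate lies in $\{0,1\}$, discarding $\star$-valued extensions for the purpose of prediction. Since $P$ is realizable by some $f^* \in \mc{H}$ that never outputs $\star$ on the support of $P$, at each leave-one-out step the restricted subhypergraph contains $f^*|_{\sub}$ as a vertex, so the algorithm is well-defined and outputs a label in $\{0,1,\star\}$ (typically in $\{0,1\}$ on realizable data). Symmetry in the training data (Assumption \ref{as:perm_inv}) is immediate from the construction, which depends only on the multiset $\uniS$ and the associated label vector.

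The core technical step is the partial analogue of Theorem \ref{thm:hlw_density}: the one-inclusion hypergraph density of a partial class with VC dimension $d$, measured on the binary-labeled subhypergraph, is at most $d$. I would reuse the shifting/compression argument of Haussler, Littlestone, and Warmuth \cite{haussler1994predicting}, now applied to this restricted subhypergraph. The key observation is that within the restricted subhypergraph, any shattering witnessed by an induced edge set corresponds to realizing all $\{0,1\}^k$ patterns on some $k$-subset of $\sub$, which is precisely the partial-class shattering condition; hence such $k$ satisfies $k \le d$. This yields $\dens \le d$ uniformly in $n$, so \cref{as:bdd_loo} holds with $M_n \le d$, and Theorem \ref{thm:risk_multiclass} applied with these quantities produces the stated PAC bound with constant \twocishaq.

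The main obstacle is ensuring that $\star$-valued extensions do not inflate the one-inclusion density beyond $d$. Restricting the orientation to binary-valued vertices is precisely what handles this: $\star$-extensions may enlarge $\mc{H}|_{\sub}$ as a set, but they do not participate in the edges governing the leave-one-out mistakes of the modified predictor, so they cannot contribute to the density measure that enters the bound. Once this combinatorial point is settled, the rest of the argument is a direct invocation of the general framework developed in \cref{thm:main,thm:risk_multiclass}, as formalized in \cref{app:oig_partial}.
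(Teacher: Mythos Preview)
Your proposal is essentially correct and follows the same route as the paper: modify the one-inclusion graph algorithm to exclude $\star$-labeled hypotheses, show the resulting density is bounded by the partial VC dimension $d$ via the Haussler--Littlestone--Warmuth argument, and then invoke \cref{thm:risk_multiclass} (hence \cref{thm:main}).

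One small point of comparison: the paper's modification (described in \cref{app:oig_partial}) is slightly cleaner than yours. Rather than restricting per query coordinate, the paper simply removes from $\mc{H}|_{\sub}$ every vertex that assigns $\star$ to \emph{any} point in $\sub$, and then builds the edges on this truncated vertex set. This immediately collapses the hypergraph to an ordinary graph with $\{0,1\}$-valued vertices, so \cref{thm:hlw_density} applies verbatim (any shattering in the truncated graph is a shattering in the partial sense, hence bounded by $d$). Your per-coordinate restriction would also work, but makes the density argument slightly less direct since there is no single ``restricted subhypergraph'' to point to. Either way, the substance of the argument is the same.
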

In addition to resolving these questions regarding optimal rates, our technique can be seen as a unifying perspective for these notions of classification. This should be contrasted with the fact that the previous algorithms and proof of optimality in the binary case \cite{hanneke2016optimal,larsen2022bagging} were built on empirical risk minimization which can provably fail in the more general settings.
Due to the generality of our main result, the list of potential applications in this section is not exhaustive.
For instance, one could extend \cref{cor:risk_partial} to the multiclass partial hypothesis setting. 
This would improve some recent bounds appearing in \cite{kalavasis2022multiclass}.

\subsection{Bounded regression}
\label{ssec:app_regression}
Moving from a discrete set $\mathcal Y$, we now explore bounded regression with $\mathcal Y = [0, 1]$ and the absolute loss $\ell(\widehat{y}, y) = \abs{\widehat{y} - y}$. The key reference for this section is the work of Bartlett and Long \cite{bartlett1998prediction}, who observed the applicability of the one-inclusion graph algorithm in this setting. 
We begin by revisiting some standard definitions. Let $\mc{X}$ be an instance space with label space $\mc{Y} = [0, 1]$ and $\mc{H} \subseteq \mc{Y}^{\mc{X}}$ denote a class of real-valued functions mapping $\mc{X}$ to $\mc{Y}$. Let $\gamma \ge 0$ be a margin parameter.
Define $\fatvf{\mathcal H}(\gamma)$ to be the largest integer for which there exists $\tau \in [0, 1]$ and a subset $S \subseteq X$ such that $|S| = \fatvf{\mathcal H}(\gamma)$ and, for any $A \subseteq S$, there is an $f_{A} \in \mathcal H$ satisfying $f_A(x) \ge \tau + \gamma$ for all $x \in A$ and $f_A(x) \le \tau - \gamma$ for $x \in S\setminus A$. 
This complexity measure, which is a scale-sensitive version of the original shattering notion of Vapnik and Chervonenkis \cite{vapnik1971theory} for real-valued functions, is explicitly defined in \cite{alon1997scale} with the name $V_{\gamma}$-dimension. 
We refer to \cite{kleer2023primal} where this complexity measure is discussed in detail.

A real-valued predictor using the one-inclusion graph algorithm is constructed in the following lemma.
Unfortunately, the authors were unable to verify the correctness of the proof of Theorem 1 in \cite{bartlett1998prediction} where such a construction was originally proposed.
We instead devise an alternative predictor that achieves an even sharper in-expectation risk bound than the one claimed in \cite{bartlett1998prediction}.
The proof of the lemma may be found in \cref{ssec:proof_oiglo}.
\begin{restatable}{lem}{oiglo}\label{lem:oiglo}
    Fix a hypothesis class $\mc{H} \subseteq [0, 1]^\mc{X}$. There is a predictor $\widehat{f}: \mc{X} \times \mc{U} \to \mc{Y}$ that, for any realizable sample $S = {(x_i, y_i)}_{i = 1}^n$, satisfies \cref{as:perm_inv,as:bdd_loo} with a leave-one-out error bound
    \begin{equation*}
        M_n \leq (n + 1) \gamma + \fatvf{\mc{H}} (\gamma).
    \end{equation*}
\end{restatable}

The main result of this section is the following PAC bound.
\begin{restatable}{thm}{loreg}\label{thm:loreg}
    Fix hypothesis class $\mc{H} \subseteq [0, 1]^\mc{X}$ and margin $\gamma \in (0, 1)$. 
    There is a predictor $\wh{f} : \mc{X} \times \mc{U} \to [0,1]$ which, for any $\delta \in (0, 1)$, and $S \ts P^n$ sampled from any realizable distribution $P$, satisfies
    \begin{equation*}
        \E_{(X,Y) \thicksim P} \lsrs{\abs{\widehat{f}(X ; S) - Y}} \leq \cishaq \lprp{\gamma + \frac{\fatvf{\mc{H}} (\gamma)}{n} + \frac{1}{n}\log\left(\frac{2e}{\delta}\right)},
    \end{equation*}
    with probability at least $1-\delta$ over the randomness of $S$.
\end{restatable}
\cref{thm:loreg} presents a bound that is optimal in the following sense: Bartlett and Long \cite{bartlett1998prediction} demonstrated a certain hypothesis class $\mathcal{H}$ for which the in-expectation version of \cref{thm:loreg} is optimal up to multiplicative constant factors.

\subsubsection{Relations to uniform convergence results}
\label{sec:relationstouniformconvergence}
Characterization of uniform convergence for real valued sets of functions has been a central question in empirical processes theory motivated by problems in statistical learning theory and convex geometry.
The work of Vapnik and Chervonenkis \cite{vapnik1971theory} initiated the study of uniform convergence for real valued functions in terms of a scale-insensitive version of the complexity measure $\fatvf{\mathcal H}(\gamma)$.
The work of Alon et al. \cite{alon1997scale} studied the complexity measure $\fatvf{\mathcal H}(\gamma)$ and showed that its finiteness for all $\gamma > 0$ is a necessary and sufficient condition for a class of functions to be a uniform Glivenko-Cantelli (GC) class. 
Similar conclusions appeared in  \cite{talagrand1996glivenko, talagrand2003vapnik}. 
See also the related results in \cite{vapnik1982necessary}.

The main problem with the complexity measure $\fatvf{\mathcal H}(\gamma)$ in the context of the uniform GC property is that it poorly characterizes the covering numbers of the corresponding class. According to  Talagrand \cite{talagrand2003vapnik}, using this complexity measure in quantitive bounds leads to a \say{loss of accuracy that is devastating when dealing
with precise rates}. In fact, most of literature works with a different complexity measure that we call the $P_{\gamma}$-dimension. Let $\mathcal{H}$ be a class of $[0, 1]$-valued functions on some domain set $\mathcal X$ and let $\gamma$ be a margin. We say that $\mathcal{H}$ $P_\gamma$-shatters a set $S \subseteq \mathcal X
$ if there exists a function $s: S \to [0, 1]$ such that, for every $A \subseteq S$, there exists some $f_A \in \mathcal{H}$ satisfying: For every $x \in S \setminus A$, $f_A(x) \leq s(x) - \gamma$ and, for every $x \in A$, $f_A(x) \geq s(x) + \gamma$. Let the $P_\gamma$-dimension of $\mathcal{H}$, denoted by $\fat{\mathcal H}(\gamma)$, be the maximal integer such that the $S \subseteq X$ of size $\fat{\mathcal H}(\gamma)$ is $P_{\gamma}$-shattered by $\mathcal{H}$. 
This complexity measure is typically referred to as the \emph{fat-shattering} dimension \cite{kearns1994efficient, bartlett1998prediction, anthony1999neural}. The main distinction between complexity measures $\fatvf{\mathcal H}(\gamma)$ and $\fat{\mathcal H}(\gamma)$ lies in the level at which shattering occurs. In the case of $\fatvf{\mathcal H}(\gamma)$, shattering takes place at a single level, denoted by $\tau$. On the other hand, the introduction of the function $s$ in the second definition allows for shattering at various scales, represented by $\tau_i$, for distinct points $x \in S$. A straightforward relationship between these measures of complexity can be expressed through the following result.

\begin{lemma}[\cite{alon1997scale}]
\label{lem:complexitycompar}
    For any hyposthesis class $\mathcal{H} \subseteq [0, 1]^{\mathcal X}$ and $\gamma > 0$, we have
    \[
    \fatvf{\mathcal H}(\gamma) \le \fat{\mathcal H}(\gamma) \le \left(\frac{1}{\gamma} + 1\right)\fatvf{\mathcal H}(\gamma/2).
    \]
\end{lemma}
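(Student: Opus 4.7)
The first inequality is essentially tautological: any $V_\gamma$-shattered set witnesses a $P_\gamma$-shattering by taking the scale function $s$ identically equal to the common level $\tau$, so I would just note that $\fatvf{\mathcal{H}}(\gamma) \le \fat{\mathcal{H}}(\gamma)$ follows from the definitions in one line.

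For the reverse direction, the plan is a pigeonhole argument on the witness function $s$. Let $d = \fat{\mathcal{H}}(\gamma)$ and fix a $P_\gamma$-shattered set $S = \{x_1, \dots, x_d\}$ with scale function $s \colon S \to [0,1]$. Partition $[0,1]$ into $N \coloneqq \lceil 1/\gamma\rceil \le 1/\gamma + 1$ half-open intervals of length $\gamma$. By pigeonhole, some interval $I = [\tau', \tau' + \gamma]$ contains $s(x)$ for a subset $S' \subseteq S$ of size at least $d/N \ge d/(1/\gamma + 1)$.

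I would then argue that $S'$ is $V_{\gamma/2}$-shattered at the single level $\tau \coloneqq \tau' + \gamma/2$. Indeed, for any $A \subseteq S'$, viewing $A$ also as a subset of $S$ and invoking the $P_\gamma$-shattering property yields an $f_A \in \mathcal{H}$ with $f_A(x) \ge s(x) + \gamma$ on $A$ and $f_A(x) \le s(x) - \gamma$ on $S \setminus A$. Using $s(x) \in [\tau', \tau' + \gamma]$ for $x \in S'$ then gives
\[
f_A(x) \ge \tau' + \gamma = \tau + \gamma/2 \quad \text{for } x \in A,\qquad f_A(x) \le \tau' = \tau - \gamma/2 \quad \text{for } x \in S' \setminus A,
\]
which is exactly $V_{\gamma/2}$-shattering of $S'$ at level $\tau$. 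Therefore $\fatvf{\mathcal{H}}(\gamma/2) \ge |S'| \ge d/(1/\gamma + 1)$, i.e., $\fat{\mathcal{H}}(\gamma) \le (1/\gamma + 1)\fatvf{\mathcal{H}}(\gamma/2)$, as desired.

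I do not expect a serious obstacle here: the only subtlety is a minor boundary issue, namely ensuring the chosen level $\tau = \tau' + \gamma/2$ lies in $[0,1]$ and that pigeonhole uses the correct ceiling. Both are handled by the fact that every bin $I$ intersects $[0,1]$ nontrivially (so we can always pick $\tau'$ with $I \cap [0,1]$ of length at most $\gamma$) and by the bound $\lceil 1/\gamma \rceil \le 1/\gamma + 1$. No other machinery is needed.
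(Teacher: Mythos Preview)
Your argument is correct and is precisely the standard pigeonhole proof of this inequality. Note, however, that the paper does not actually supply its own proof of this lemma: it is stated with a citation to \cite{alon1997scale} and used as a black box, so there is nothing in the paper to compare your approach against. Your write-up recovers the original argument from that reference, and the boundary issue you flag (the last bin possibly having length less than $\gamma$, and ensuring $\tau\in[0,1]$) is handled exactly as you describe.
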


Given a training sample $S = ((x_i,y_i))_{i=1}^n$, recall that an ERM algorithm selects a hypothesis $\widehat{f}_{\operatorname{ERM}}$ as
\[
\widehat{f}_{\operatorname{ERM}} \in \underset{f \in \mc{H}}{\operatorname{argmin}} \ \frac{1}{n}\sum\limits_{i = 1}^n|f(x_i) - y_i|.
\]
To place the result of \Cref{thm:loreg} into better perspective, we present a more standard risk bound valid for any ERM classifier via a uniform convergence argument.
This bound is expressed in terms of $\fat{\mathcal H}(\gamma)$.
We define the covering numbers with respect to the empirical $L_1$ distance. Given the sample $x_1, \ldots, x_n$, we define, for any $f, g \in \mathcal H$, $\rho_n(f, g) = \frac{1}{n}\sum_{i = 1}^n|f(x_i) - g(x_i)|$. Let $\mathcal{N}_1(t, \mathcal H, n)$ denote the maximal (with respect to $x_1, \ldots, x_n$) covering number of $\mathcal H$ with respect to $\rho_n$ at scale $t$.
As we could not find an explicit reference in the literature,\footnote{There are closely related bounds in \cite[Section 21.4]{anthony1999neural}.} we provide a standard proof in \cref{ssec:ermperformance}.
To simplify the bound, we introduce a mild regularity assumption on the behavior of $\fat{\mc{H}} (\cdot)$ which allows us to use the powerful result of Rudelson and Vershynin \cite{rudelson2006combinatorics}.

\begin{restatable}{proposition}{ermperformance}
    \label{prop:ermperformance}
There are absolute $c_1, c_2 > 0$ such that the following holds.
Fix a hypothesis class $\mc{H} \subseteq [0, 1]^\mc{X}$ and margin parameter $\gamma \in (0,1)$. Assume that there is a constant $c_3 > 1$ such that for all $\theta > 0$, it holds that $\fat{\mc{H}} (c_3 \theta) \le \fat{\mc{H}} (\theta)/2$. For any $\delta \in (0, 1)$, every ERM algorithm that receives a training sample $S \sim P^n$ from a realizable distribution $P$ as input and outputs a hypothesis $\widehat{f}_{\operatorname{ERM}}$ satisfies
    \begin{equation*}
        \E_{(X,Y) \thicksim P} \lsrs{\abs{\widehat{f}_{\operatorname{ERM}}(X) - Y}} \leq c_1\lprp{\gamma + \frac{\fat{\mc{H}} (c_2\gamma)}{n} + \frac{1}{n}\log\left(\frac{1}{\delta}\right)},
    \end{equation*}
with probability at least $1 - \delta$ over the randomness of $S$.
\end{restatable}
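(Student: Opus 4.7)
The plan is to prove the bound via the classical combination of symmetrization with a ghost sample, discretization by $L_1$ covers, and a concentration estimate for random partitions, with the Rudelson--Vershynin covering number bound \cite{rudelson2006combinatorics} supplying the sharp complexity estimate under the regularity hypothesis on $\mathrm{fat}_{\mathcal{H}}$.

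Since $P$ is realizable, the target $f^* \in \mathcal{H}$ satisfies $\frac{1}{n}\sum_{i} |f^*(x_i)-y_i| = 0$, so any ERM output $\widehat{f}_{\operatorname{ERM}}$ has zero empirical $\ell_1$-error on $S$. It therefore suffices to establish the one-sided uniform bound: with probability at least $1-\delta$, every $f \in \mathcal{H}$ with $\widehat{\mathbb{E}}_S[\ell_f] = 0$ satisfies $\mathbb{E}_P[\ell_f] \leq c_1\bigl(\gamma + \mathrm{fat}_{\mathcal{H}}(c_2\gamma)/n + \log(1/\delta)/n\bigr)$. I would first apply a one-sided double-sample symmetrization with a ghost sample $S' \sim P^n$:
\[
\Pr\bigl(\exists f:\ \widehat{\mathbb{E}}_S[\ell_f]=0,\ \mathbb{E}_P[\ell_f]\geq \epsilon\bigr) \leq 2\,\Pr\bigl(\exists f:\ \widehat{\mathbb{E}}_S[\ell_f]=0,\ \widehat{\mathbb{E}}_{S'}[\ell_f]\geq \epsilon/2\bigr),
\]
valid whenever $n\epsilon$ exceeds an absolute constant; the factor of $2$ uses Chebyshev together with the $[0,1]$-loss inequality $\mathrm{Var}(\ell_f) \leq \mathbb{E}[\ell_f]$.

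Next, condition on the pooled sample $T = S \cup S'$ of size $2n$ and take a $\gamma$-net $\mathcal{N}$ of $\mathcal{H}|_T$ in the empirical $L_1$ metric $\rho_{2n}$. Because $\ell$ is $1$-Lipschitz in its first argument, any witness $f$ to the symmetrized event admits a representative $\widetilde f \in \mathcal{N}$ obeying $\widehat{\mathbb{E}}_S[\ell_{\widetilde f}] \leq 2\gamma$ and $\widehat{\mathbb{E}}_{S'}[\ell_{\widetilde f}] \geq \epsilon/2 - 2\gamma$. The Rudelson--Vershynin bound on $L_1$ covering numbers, together with the assumed regularity $\mathrm{fat}_{\mathcal{H}}(c_3\theta) \leq \mathrm{fat}_{\mathcal{H}}(\theta)/2$ (which renders the dyadic sum of $\mathrm{fat}_{\mathcal{H}}$ geometric), yields $\log|\mathcal{N}| \leq C\cdot \mathrm{fat}_{\mathcal{H}}(c'\gamma)$ with no logarithmic overhead in $1/\gamma$.

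For the concentration step, fix $\widetilde f \in \mathcal{N}$ and view the split of $T$ into $(S,S')$ as a uniformly random balanced partition of the multiset $\{a_i := \ell_{\widetilde f}(z_i)\}_{i=1}^{2n} \subseteq [0,1]$. The bad event demands $\widehat{\mathbb{E}}_S \leq 2\gamma$, while the identity $\widehat{\mathbb{E}}_S + \widehat{\mathbb{E}}_{S'} = 2\bar a$ combined with $\widehat{\mathbb{E}}_{S'} \geq \epsilon/2 - 2\gamma$ forces the pooled mean $\bar a \geq \epsilon/8$ once $\gamma$ is small relative to $\epsilon$; hence the required deviation $t = \bar a - 2\gamma$ is itself of order $\bar a$. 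Applying Bernstein for sampling without replacement with the variance bound $\sigma^2 \leq \bar a$ afforded by $a_i \in [0,1]$ then produces a tail of order $\exp(-c n \bar a) \leq \exp(-c'n\epsilon)$, rather than the weaker $\exp(-cn\epsilon^2)$ one would obtain from Hoeffding. A union bound over $\mathcal{N}$ yields total failure probability at most $\exp\bigl(C\,\mathrm{fat}_{\mathcal{H}}(c'\gamma) - c' n\epsilon\bigr)$, and the choice $\epsilon = C\bigl(\gamma + \mathrm{fat}_{\mathcal{H}}(c_2\gamma)/n + \log(1/\delta)/n\bigr)$ closes the argument. The main obstacle is this last concentration step: the sharp $\exp(-cn\epsilon)$ scaling (as opposed to the Hoeffding-type $\exp(-cn\epsilon^2)$) is precisely what yields the fast $1/n$ rate in the final risk bound, and it rests on coupling the sampling-without-replacement Bernstein inequality with the variance-to-mean control $\sigma^2 \leq \bar a$ available for $[0,1]$-valued losses. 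A secondary technical point is invoking Rudelson--Vershynin correctly under the regularity assumption so that the cover cardinality grows only linearly in $\mathrm{fat}_{\mathcal{H}}(c_2\gamma)$, with no stray $\log(1/\gamma)$ factor leaking into the final bound.
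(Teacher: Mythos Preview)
Your proposal is correct and lands on the same result, but the route differs from the paper's in a meaningful way. The paper does not re-derive the symmetrization/covering/Bernstein argument from scratch; instead it invokes a ready-made ratio-type uniform convergence bound (Theorem~19.7 in \cite{anthony1999neural}), which packages exactly the double-sample, discretization, and multiplicative concentration steps you spell out, and then plugs in the Rudelson--Vershynin covering estimate just as you do. Because that off-the-shelf bound ties the covering scale to the deviation scale (both are $t$), the paper must finish with a critical-scale argument: it locates $\gamma^*$ balancing $\mathrm{fat}_{\mathcal{H}}(c_2\gamma^*)$ against $n\gamma^*$, splits into two cases depending on whether $n\gamma^*$ dominates $\log(1/\delta)$, and only at the end relates $\gamma^*$ back to the user-specified $\gamma$ by monotonicity.

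Your approach decouples the cover scale (fixed at the given $\gamma$) from the deviation scale $\epsilon$, so the final choice $\epsilon \asymp \gamma + \mathrm{fat}_{\mathcal{H}}(c_2\gamma)/n + \log(1/\delta)/n$ closes the argument directly without the $\gamma^*$ case analysis. The tradeoff is that you must carry out the sampling-without-replacement Bernstein step by hand (and argue carefully that the required deviation $t=\bar a - 2\gamma$ is of order $\bar a$, which needs $\gamma\lesssim\epsilon$), whereas the paper offloads all of this to a single citation. Both arguments hinge on the same two substantive ingredients---the $\exp(-cn\epsilon)$ tail (rather than $\exp(-cn\epsilon^2)$) from the variance-to-mean control for $[0,1]$-losses, and the Rudelson--Vershynin bound under the regularity hypothesis to kill the $\log(1/\gamma)$ factor---so the difference is one of packaging rather than of mathematical content.
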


\cref{lem:complexitycompar} demonstrates that the bound of \cref{thm:loreg} is never worse than the bound of \cref{prop:ermperformance}, up to constant multiplicative factors. However, the difference between the corresponding complexity measures can be quite significant, as shown by the following example which pertains to the classic problem of isotonic regression.  Our example also illustrates the result from \cite{rudelson2006combinatorics}, emphasizing the equivalence between the logarithm of covering numbers and the $P_{\gamma}$-dimension for general hypothesis classes.

\begin{example}[Isotonic regression with absolute loss]
\label{ex:monotonic}
Let $\mathcal H \subseteq [0, 1]^{[0, 1]}$ be a set of all monotonic functions. Then, for any $\gamma, t \in (0, 1]$,
\[
\fatvf{\mathcal H}(\gamma) \le 2, \quad \textrm{but} \quad \fat{\mathcal H}(\gamma) = \Theta\left(\frac{1}{\gamma}\right),\quad \textrm{and} \quad \log\mathcal{N}_1(t, \mathcal H, n) = \Theta\left(\frac{1}{t}\right).
\]
Specifically, in this case, the upper bound of \Cref{thm:loreg} scales as $O\left(\frac{1}{n}\log\left(\frac{1}{\delta}\right)\right)$, whereas the covering-number-based upper bound of \Cref{prop:ermperformance} scales as $O\left(\frac{1}{\sqrt{n}} + \frac{1}{n}\log\left(\frac{1}{\delta}\right)\right)$.
\end{example}

The bounds in this example are based on several existing results. The upper bound for $\fat{\mathcal H}(\gamma)$ follows from \cref{lem:complexitycompar}, while the lower bound can be directly constructed from the definition. The upper bound for $\log\mathcal{N}_1(t, \mathcal H, n)$ follows from \cite[Theorem 2.7.5]{van1996weak}, while the lower bound is implied by \cite[Theorem 12.10]{anthony1999neural}. 

One might think that employing a version of \cref{prop:ermperformance} with localized covering numbers could improve the ERM convergence rates in \cref{ex:monotonic}. 
Unfortunately, this does not help because global and local covering numbers have the same behavior for non-parametric classes \cite{yang1999information, massart2006risk}.
Finally, we mention that the recent work of Kot\l{}owski, Koolen, and Malek \cite{kotlowski2017random} explores in-expectation risk bounds for the class considered in \cref{ex:monotonic} in a similar learning setting.

\section{Related work}
\label{sec:related_work}
\paragraph{The optimal sample complexity of binary classification.}
Determining the optimal sample complexity, or equivalently the optimal risk bound,  of realizable (noiseless) binary classification is a classic problem in statistical learning theory. The early work of Vapnik and Chervonenkis \cite{vapnik1964class} introduced what we call the PAC model in this paper (a terminology due to Valiant \cite{valiant1984theory}, who also studied these questions from a computational perspective), providing matching upper and lower risk bounds for finite hypothesis classes $\mathcal{H}$. Subsequently, Vapnik and Chervonenkis presented the first upper bounds on the error rate for an infinite class, specifically the class of half-spaces in $\mathbb{R}^p$ \cite{vapnik1968algorithms}. Their next groundbreaking work \cite{vapnik1968uniform} generalized the shattering properties of half-spaces to general classes and introduced the complexity measure now known as the VC dimension. The monograph \cite{vapnik74theory} and the paper \cite{blumer1989learnability} provide the canonical risk bound applicable to any classifier consistent with the training sample:
\begin{equation}
\label{eq:ermupperbound}
O\left(\frac{d}{n}\log\left(\frac{n}{d}\right) + \frac{1}{n}\log\left(\frac{1}{\delta}\right)\right).
\end{equation}
This convergence rate is known to be unimprovable for certain sample consistent classification rules and hypothesis classes \cite{haussler1994predicting, auer2007new, simon2015almost, hanneke2016refined}.
However, the existing lower bound \cite{EhrenfeuchtHKV89} that applies to any predictor
\begin{equation}
\label{eq:lowerboundpac}
\Omega\left(\frac{d}{n} + \frac{1}{n}\log\left(\frac{1}{\delta}\right)\right)
\end{equation}
did not match the risk bound \eqref{eq:ermupperbound} by a multiplicative logarithmic factor.
The pivotal work of Haussler, Littlestone, and Warmuth \cite{haussler1994predicting} (originally presented as \cite{Haussler1988}) took a significant step towards proving an upper bound that matches the rate of \eqref{eq:lowerboundpac}. They introduced the one-inclusion graph algorithm and presented a simple confidence boosting strategy that occasionally provides a better risk bound
\begin{equation}
\label{eq:oigwhp}
O\left(\frac{d}{n}\log\left(\frac{1}{\delta}\right)\right),
\end{equation}
raising the question of what the optimal sample complexity of PAC learning is, and whether the one-inclusion graph algorithm is itself optimal \cite{warmuth2004optimal}. Since then, numerous authors have provided bounds matching the rate of the lower bound \eqref{eq:lowerboundpac} for specific classes \cite{blumer1989learning, Littlestone89, auer1997learning, auer2007new, darnstadt2015optimal, hanneke2016refined} and/or under additional assumptions on the distribution of the data \cite{long2003upper, bshouty2009using, balcan2013active, hanneke2016refined}. The problem of obtaining a rate matching \eqref{eq:lowerboundpac} up to constant factors was ultimately resolved by Hanneke \cite{hanneke2016optimal}, who improved upon an earlier result by Simon \cite{simon2015almost} which contained a slowly growing additional multiplicative term.

\paragraph{Leave-one-out/exchangeability arguments.}
The leave-one-out analysis is a standard approach to control the \emph{expected} risk of classification rules. Some early applications of this method can be traced back to the monograph of Vapnik and Chervonenkis \cite{vapnik74theory}, where they utilized it to derive optimal risk bounds for both the Perceptron algorithm and the hard margin SVM. However, these results were valid only in-expectation, prompting Vapnik and Chervonenkis \cite[Chapter VI, Section 7]{vapnik74theory} to ask whether the same could be achieved with high probability for some practically relevant algorithms. Lunts and Brailovsky \cite{lunts1967evaluation} showed that this is not possible in general by presenting an example where the leave-one-out argument failed to yield a low variance risk bound, thus excluding the possibility of attaining high-probability upper bounds without additional assumptions. The leave-one-out argument is also central to the analysis of the one-inclusion graph algorithm introduced in \cite{haussler1994predicting}. Warmuth \cite{warmuth2004optimal} conjectured that this algorithm consistently provides optimal PAC upper bounds. However, it was recently demonstrated in \cite{adenali2022oneinclusion} that, without additional assumptions, the one-inclusion graph algorithm is unable to achieve the optimal PAC bound. Lastly, we draw attention to the recent analysis of \emph{stable compression schemes} in \cite{Zhivotovskiy2017optimal, bousquet2020proper, hanneke2021stable}, which reveals that a leave-one-out argument can indeed result in sharp high probability bounds when stability assumptions are incorporated.

\paragraph{Confidence boosting and online to batch conversions.}
We first discuss the \emph{confidence boosting} approach, a natural method for deriving high-probability risk bounds from in-expectation ones. This technique involves running the in-expectation optimal algorithm on $O\left(\log(1/\delta)\right)$ independent partitions of the original sample and selecting the best performer based on an independent hold-out sample. Confidence boosting was employed by Haussler, Littlestone, and Warmuth \cite{haussler1994predicting}, however this approach only leads to a bound with a multiplicative $\log(1/\delta)$ as in \eqref{eq:oigwhp}.

A more pertinent approach is the online-to-batch conversion, with its earliest relevant applications tracing back to the works \cite{Aizerman1965, vapnik74theory}. Littlestone \cite{Littlestone89} demonstrated that if a conservative online algorithm (an algorithm that only changes its state when a point is misclassified) makes a finite number $M$ of mistakes on any sample, it can be converted into a classifier in a statistical setting with a risk bound
\begin{equation}
\label{eq:littlstoneoig}
O\left(\frac{M}{n} + \frac{1}{n}\log\left(\frac{1}{\delta}\right)\right).
\end{equation}
Despite providing a correct additive term containing $\log(1/\delta)$, Littlestone's bound has two major limitations, both addressed in this paper. The first limitation is the requirement of a finite number of mistakes, characterized by the so-called \emph{Littlestone dimension} \cite{littlestone1988learning}, which is infinite for most practically relevant hypothesis classes. Haussler, Littlestone, and Warmuth \cite{haussler1994predicting} attempted to address this issue by considering the \emph{expected} number of mistakes made by one-inclusion graph algorithms run on the sample in an online manner, resulting in an expected number of mistakes scaling as $O(d\log n)$. The second limitation of the bound \eqref{eq:littlstoneoig} is the necessity for the algorithm to be conservative. Although any algorithm achieving a deterministic mistake bound can be converted to a conservative one \cite{floyd1995sample}, this is not known to hold for algorithms achieving a mistake bound \emph{in-expectation} as is the case in our work.

A pertinent reference is the work of Wu, Heidari, Grama, and Szpankowski \cite{wu2022expected}.
They proved, via a reverse martingale argument, that \emph{high-probability online bounds} analogous to the in-expectation bound of Haussler et al. \cite{haussler1994predicting} can be attained, showing that $O(d\log n + \log(1/\delta))$ mistakes are made with probability at least $1 - \delta$.
Another observation stems from \cite{haussler1994predicting}, who noted that when the one-inclusion graph algorithm is executed in an online fashion, and its error is measured solely on the second half of the sample --- referred to as the \emph{suffix} --- the expected number of mistakes diminishes to $O(d)$. 
This idea of utilizing the suffix of the sample is quite general and has led to recent sharp high-probability bounds for stochastic gradient descent \cite{harvey2019tight}.

\paragraph{Multiclass classification and partial hypotheses.}

Multiclass classification is a natural extension of the binary case. The works \cite{natarajan1988two, natarajan1989learning} extended the PAC model to multiclass problems and introduced an analog of the VC dimension, now commonly referred to as the Natarajan dimension. Latter work asked whether this dimension characterizes learnability and highlighted the insufficiency of standard uniform convergence techniques for answering this question. Subsequent works exploring the Natarajan dimension include \cite{haussler1995generalization, bendavid95}. 
While the Natarajan dimension enables the uniform convergence property with a finite number of classes, ERMs may fail to learn the class when the number of classes is unbounded \cite{daniely2015multiclass}.
The authors of \cite{rubinstein2009shifting} adapted the one-inclusion-graph algorithm to multiclass classification, while Daniely and Shalev-Shwartz \cite{daniely2014optimal} characterized the expected optimal risk of multiclass problems by the one-inclusion hypergraph density. Recent results and a detailed survey on complexity measures in multiclass classification can be found in \cite{brukhim2022characterization}.

A partial hypothesis class consists of $\{0, 1, \star\}$-valued hypotheses. A key distinction from multiclass classification is the unique treatment of $\star$-values, which do not affect the notion of shattering or corresponding complexity measures. This idea traces back to the work of Haussler and Long \cite{haussler1995generalization} and is further explored in the work of Bartlett and Long \cite{bartlett1998prediction}. 
The authors in \cite{bartlett1998prediction} recognized that uniform convergence should not be relied upon for partial hypotheses and adapted the one-inclusion graph algorithm accordingly. Subsequent work by Long \cite{long2001agnostic} extended the analysis of partial hypotheses to the agnostic setting and derived corresponding risk bounds. The recent resurgence of interest in this topic has been sparked by the work of Alon, Hanneke, Holzman, and Moran~\cite{alon2022theory}. Their contributions include determining sharper bounds for both the realizable and agnostic cases, while also extending these ideas to online learning.

\section{Proof of main result}
\label{sec:proof_main_result}

In this section, we prove the main technical result of the paper, \cref{thm:main}. Recall that we aim to bound the average risk
\begin{equation}
    \label{eq:avg_err_proof_overview}
    \frac{\splita }{\splitb n} \cdot \sum_{t = n / \splita}^{n-1} \risk{\widehat{f}(\cdot; \Slt{})}{P}
\end{equation}
of a sequence of predictors constructed from the data with high probability.
As a first step, we analyze the empirical quantity
\begin{equation}
    \label{eq:mistake_bound_proof_overview}
    \frac{\splita }{\splitb n} \cdot \sum_{t = n / \splita}^{n-1} \ell (\wh{f}(X_{t + 1}; \Slt{}), Y_{t + 1}).
\end{equation}
Intuitively,  \eqref{eq:mistake_bound_proof_overview} approximates \eqref{eq:avg_err_proof_overview} since the test point $X_{t + 1}$ is independent of the training sample $\Slt{}$ used to construct the predictor. That is, $\ell (\wh{f}(X_{t + 1}; \Slt{}), Y_{t + 1})$ is an unbiased estimator of $\risk{\widehat{f}(\cdot; \Slt{})}{P}$. We then convert the in-expectation bound to a PAC bound using a martingale concentration argument in \cref{lem:forward}. 

The main technical challenge is to bound \eqref{eq:mistake_bound_proof_overview} with high probability. This step makes crucial use of the fact that the predictors under consideration are permutation invariant as captured by \cref{as:perm_inv}. Crucially, our analysis conditions on the values and labels of the inputs $\lprp{(X_i, Y_i)}_{i = 1}^n$ but not on their \emph{permutation} in the input sequence. The key insight in the proof is the following conditional independence structure. For any $i < j$, $\ell (\wh{f}(X_{i + 1}; \Slt[i]), Y_{i + 1})$ has small expected value even when conditioned on $\ell (\wh{f}(X_{j + 1}; \Slt[j]), Y_{j + 1})$. This is due to the fact that $\ell (\wh{f}(x_{j + 1}; \Slt[j]), y_{j + 1})$ is determined by the \emph{set} of training samples $\Slt[j]$ and $(X_{j+1},Y_{j+1})$ while $\ell (\wh{f}(x_{i + 1}; \Slt[i]), y_{i + 1})$ depends on the \emph{set} of training samples $\Slt[i]$ and $(X_{i+1},Y_{i+1})$. Due to the exchangability of $\Slt[i]$ even given the set $\Slt[j]$, we can relate $\ell (\wh{f}(X_{j + 1}; \Slt[j]), Y_{j + 1})$ has small expected value from \cref{as:bdd_loo}. 
This intuition is formalized in \cref{lem:reverse} by constructing an appropriate filtration and using a simple martingale concentration bound. 

To prove our results, we need the following martingale Chernoff bounds. 
We include a proof of \cref{lem:chernoff-ish} in \cref{ssec:app_martingale} for completeness.

\begin{restatable}{lemma}{chernoffish}\label{lem:chernoff-ish}
Let $W_1, \ldots, W_T$ be a stochastic process adapted to the filtration $(\mathcal{F}_i)_{i \le T}$. 
Suppose that $0 \leq W_t \leq 1$ almost surely. 
Then, for any $ \delta, \lambda,\eta \in (0,1)$, we have
\begin{align}
    &\Pr\left[\sum\limits_{t = 1}^T W_t \geq \frac{e^\lambda-1}{\lambda}\sum\limits_{t = 1}^T \E[W_t| \mathcal{F}_{t-1}] + \frac{\log(1/\delta)}{\lambda} \right] \leq \delta, \label{eq:chernoffish1}
 \end{align}
and
 \begin{align}
    &\Pr\left[\sum\limits_{t = 1}^T \E[W_t| \mathcal{F}_{t-1}] \geq \frac{\eta e^\eta}{e^\eta-1}\sum\limits_{t = 1}^T W_t + \frac{e^\eta\log(1/\delta)}{e^\eta-1} \right] \leq \delta.\label{eq:chernoffish2}
\end{align}
\end{restatable}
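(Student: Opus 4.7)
}

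The plan is to establish both bounds via the standard exponential supermartingale method, applying Markov's inequality after constructing martingale-like quantities whose drift is controlled by the conditional expectations of $W_t$. This is the martingale analogue of a multiplicative Chernoff bound, and the only substantive ingredient beyond Markov is a one-sided MGF estimate for bounded random variables.

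For the upper tail bound \eqref{eq:chernoffish1}, I would first observe the pointwise convexity inequality
\[
    e^{\lambda x} \;\le\; 1 + (e^\lambda - 1)x \quad \text{for all } x \in [0,1],
\]
which follows because $x \mapsto e^{\lambda x}$ is convex on $[0,1]$ and the right-hand side is its chord between $x=0$ and $x=1$. Applying this with $x = W_t$, taking conditional expectation, and using $1 + y \le e^y$, yields
\[
    \E\!\left[e^{\lambda W_t} \,\middle|\, \mathcal{F}_{t-1}\right] \;\le\; \exp\!\bigl((e^\lambda - 1)\,\E[W_t \mid \mathcal{F}_{t-1}]\bigr).
\]
Then I would define the process
\[
    M_t \;=\; \exp\!\left(\lambda \sum_{s=1}^{t} W_s \;-\; (e^\lambda - 1)\sum_{s=1}^{t} \E[W_s \mid \mathcal{F}_{s-1}]\right),
\]
and verify that $(M_t)$ is a nonnegative supermartingale with $M_0 = 1$ using the MGF estimate above. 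Markov's inequality then gives $\Pr[M_T \ge 1/\delta] \le \delta$, and taking logarithms and dividing by $\lambda$ gives exactly \eqref{eq:chernoffish1}.

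For the lower tail bound \eqref{eq:chernoffish2}, I would repeat the same argument on the negative exponential side, using the analogous chord bound
\[
    e^{-\eta x} \;\le\; 1 - (1 - e^{-\eta})\,x \quad \text{for all } x \in [0,1],
\]
which yields $\E[e^{-\eta W_t} \mid \mathcal{F}_{t-1}] \le \exp(-(1 - e^{-\eta})\,\E[W_t \mid \mathcal{F}_{t-1}])$. Then
\[
    N_t \;=\; \exp\!\left(-\eta \sum_{s=1}^{t} W_s \;+\; (1 - e^{-\eta})\sum_{s=1}^{t} \E[W_s \mid \mathcal{F}_{s-1}]\right)
\]
is a nonnegative supermartingale with $N_0 = 1$. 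Markov and rearrangement yield the bound with constants $\eta/(1 - e^{-\eta}) = \eta e^\eta/(e^\eta - 1)$ and $1/(1-e^{-\eta}) = e^\eta/(e^\eta - 1)$, matching \eqref{eq:chernoffish2}.

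The only step with any subtlety is checking that the exponential processes are genuinely supermartingales rather than just having the right one-step increment, but this is immediate once one factors $M_t = M_{t-1} \cdot \exp(\lambda W_t - (e^\lambda - 1)\E[W_t \mid \mathcal{F}_{t-1}])$ and takes conditional expectation, using the $\mathcal{F}_{t-1}$-measurability of the predictable compensator term. No other obstacles arise; the remainder is algebraic manipulation to present the bounds in the stated form.
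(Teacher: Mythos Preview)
Your proposal is correct and follows essentially the same approach as the paper: both use the chord bounds $e^{\lambda x}\le 1+(e^\lambda-1)x$ and $e^{-\eta x}\le 1-(1-e^{-\eta})x$ on $[0,1]$ together with $1+y\le e^y$ to control the conditional MGF, form the corresponding exponential supermartingale, and conclude via Markov's inequality. The paper writes out the supermartingale property as an explicit induction on the product, but this is exactly your factoring step $M_t = M_{t-1}\cdot\exp(\cdots)$ plus conditioning.
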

We first formally look at the argument connecting the empirical loss to the risk. 
\begin{lemma}[Forward martingale risk bound]\label{lem:forward}
    Fix a realizable distribution $P$ and let $ \ell $ be a loss bounded by $1$. 
    Let $\widehat{f} : \mc{X} \times \mc{U} \to \mc{Y}$ be a predictor.
    Given a training sample $S =((X_1,Y_1), \dots, (X_T,Y_T)) \sim P^{T}$, for any $\delta, \eta \in (0,1)$, we have that
    \begin{align*}
        \sum_{t=T/\splita}^{T-1} \risk{\widehat{f}  \left(  \cdot ; S_{ \leq t } \right)     }{P} \leq \frac{\eta e^\eta}{e^\eta -1} \sum_{t=T/\splita}^{T-1}   \ell\left(  \widehat{f}  \left(  X_{t+1} ; S_{ \leq t  } \right)   ,Y_{t+1}\right) +   \frac{e^\eta\log(1/\delta)}{e^\eta-1},
    \end{align*}
    with probability at least $1-\delta$ over the randomness of $S$.
    \end{lemma}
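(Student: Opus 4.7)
The plan is to realize the per-round losses as an adapted stochastic process whose one-step conditional expectations reproduce the population risks, and then invoke the second bound of \cref{lem:chernoff-ish} directly. Let $\mathcal{F}_t = \sigma((X_1,Y_1),\ldots,(X_t,Y_t))$ for $t \geq 0$ (with $\mathcal{F}_0$ trivial), and for $t = 1, \ldots, T$ define
\[
W_t = \ell\bigl(\widehat{f}(X_t;\,S_{\leq t-1}), Y_t\bigr)\cdot \mathbf{1}\{t > T/\splita\}.
\]
Each $W_t$ takes values in $[0,1]$ and is $\mathcal{F}_t$-measurable, so $(W_t)_{t \leq T}$ is adapted. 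The masking by $\mathbf{1}\{t > T/\splita\}$ is purely cosmetic: it lets the full-sum template of \cref{lem:chernoff-ish} produce a suffix-sum bound without restating the lemma.

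The key observation, which drives the whole conversion, is that since $S$ is i.i.d., $(X_t, Y_t)$ is independent of $\mathcal{F}_{t-1}$, while the predictor $\widehat{f}(\cdot\,;\,S_{\leq t-1})$ is itself $\mathcal{F}_{t-1}$-measurable. Consequently, for $t > T/\splita$,
\[
\E[W_t \mid \mathcal{F}_{t-1}] = \E_{(X,Y)\sim P}\!\left[\ell(\widehat{f}(X; S_{\leq t-1}), Y)\,\big|\,\mathcal{F}_{t-1}\right] = \risk{\widehat{f}(\cdot; S_{\leq t-1})}{P},
\]
and the conditional expectation vanishes for $t \le T/\splita$. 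Reindexing $s = t-1$ identifies $\sum_{t=1}^T \E[W_t \mid \mathcal{F}_{t-1}]$ with $\sum_{s=T/\splita}^{T-1} \risk{\widehat{f}(\cdot; S_{\leq s})}{P}$ and $\sum_{t=1}^T W_t$ with $\sum_{s=T/\splita}^{T-1} \ell(\widehat{f}(X_{s+1}; S_{\leq s}), Y_{s+1})$.

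The final step is a direct application of inequality \eqref{eq:chernoffish2} of \cref{lem:chernoff-ish} with the given parameters $\eta$ and $\delta$, which after the above substitutions yields the claimed inequality with probability at least $1-\delta$. I do not foresee a genuine obstacle here: this is the easier, ``forward'' half of the two conversion steps behind \cref{thm:main}, and the real difficulty of the main theorem sits in the complementary ``reverse'' direction, which must dominate the empirical sum $\sum_{t=T/\splita}^{T-1} \ell(\widehat{f}(X_{t+1}; S_{\leq t}), Y_{t+1})$ by the leave-one-out budget $M_n$ using the permutation-exchangeability argument flagged in the proof overview.
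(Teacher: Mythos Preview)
Your proof is correct and follows essentially the same approach as the paper's: set up the natural forward filtration, recognize that the one-step conditional expectation of the per-round loss equals the risk, and apply \eqref{eq:chernoffish2} of \cref{lem:chernoff-ish}. The only cosmetic difference is that you use the indicator $\mathbf{1}\{t > T/\splita\}$ to zero out the prefix while the paper simply applies the lemma to the suffix variables directly; both handle the summation range equivalently.
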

\begin{proof}
    For $t=1, \dots, T$ define the usual filtration $(\mc{F}_t)_{t=1}^T$ where $\mc{F}_{t} = \sigma(X_1, \dots, X_t)$.
    Here, $ \sigma\left( \cdot \right)$ denotes the smallest $\sigma$-algebra with respect to which the arguments are measurable.
    With slight abuse of notation, we write $\widehat{f}_{t}(\cdot) = \widehat{f}  \left(  \cdot ; \Slt \right)$. 
    By definition $\risk{\widehat{f}_{t}}{P} = \E[\ell(\widehat{f}_{t}(X_{t+1}),Y_{t+1})\mid \mc{F}_{t}]$, so applying \cref{lem:chernoff-ish} to the random variables 
    \[
    \ell(\widehat{f}_{T/\splita}(X_{T/\splita+1}),Y_{T/\splita+1}),\dots, \ell(\widehat{f}_{T-1}(X_T),Y_t)
    \]
    gives us that
    \[
    \sum_{t=T/\splita}^{T-1} \risk{\widehat{f}_{t}}{P} \leq \frac{\eta  e^\eta }{e^\eta -1} \sum_{t=T/\splita}^{T-1}   \ell(\widehat{f}_{t}(X_{t+1}),Y_{t+1}) +   \frac{e^\eta \log(1/\delta)}{e^\eta -1},
    \]
    with probability at least $1-\delta$ over the randomness of $S$.
\end{proof}
As noted earlier, \cref{lem:forward} tells us that to bound the average risk of the predictors $\widehat{f}  \left(  \cdot , S_{ \leq t  } \right) $, it suffices to bound the loss when evaluated on the points $X_t$.
In order to do this, we use a reverse martingale argument. These arguments are standard in the study of empirical processes since the work of Pollard \cite{Pollard} and have found recent applications in sequential estimation and uncertainty quantification. The results of Manole and Ramdas \cite[Proposition 5]{manole2023} make a connection between leave-one-out arguments and reverse martingales. Wu, Heidari, Grama, and Szpankowski~\cite{wu2022expected} used similar arguments specifically for the study of sequential covering numbers using the one-inclusion graph algorithm. The proof below could shortened by translating it to the language of reverse martingales and the exchangeable filtration, but we chose an elementary presentation in terms of random permutations for clarity.
\begin{lemma}[Reverse martingale bound]\label{lem:reverse}
Fix a realizable distribution $P$. Let $\widehat{f} :  \mc{X} \times \mc{U} \to \mc{Y}$ be a predictor satisfying \cref{as:perm_inv,as:bdd_loo} with leave-one-out error $M_{n}$.
Given a training sample $S = (Z_1 , \dots, Z_T ) =  ((X_1,Y_1), \dots, (X_T,Y_T))\sim P^{T}$, for any $\delta, \lambda \in (0,1)$, we have that
\begin{align*}
\sum_{t =T/\splita }^{T-1} \ell\left(  \widehat{f}  \left(  X_{t+1} ; \Slt \right)   ,Y_{t+1}\right) \leq \left(\ln(\splita)+\frac{1}{2}\right)\frac{e^\lambda - 1}{\lambda}M_{T} + \frac{\log(1/\delta)}{\lambda},
\end{align*}
with probability at least $1-\delta$ over the randomness of $S$. 
\end{lemma}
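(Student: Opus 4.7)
The plan is to apply the Chernoff-type bound \eqref{eq:chernoffish1} of \cref{lem:chernoff-ish} to a time-reversed indexing of the losses $W_t \coloneqq \ell\bigl(\widehat{f}(X_{t+1}; S_{\leq t}), Y_{t+1}\bigr)$, $t \in \{T/\splita,\ldots,T-1\}$. The key idea is the following: by \cref{as:perm_inv}, the predictor $\widehat{f}(\cdot; S_{\leq t})$ depends only on the \emph{unordered} set $\{Z_1,\ldots,Z_t\}$; and by i.i.d.\ sampling, after conditioning on this set together with the ``later'' points $Z_{t+2},\ldots,Z_T$, the new query $Z_{t+1}$ is uniformly distributed over $\{Z_1,\ldots,Z_{t+1}\}$. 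Thus the one-step conditional expectation of $W_t$ becomes a uniform average of leave-one-out losses on a realizable sample of size $t+1$, which is at most $M_{t+1}/(t+1)$ by \cref{as:bdd_loo}.

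\textbf{Filtration.} To obtain a \emph{forward} filtration as required by \cref{lem:chernoff-ish}, reindex via $k = T - t$, so that $k$ ranges over $\{1,\ldots,3T/\splita\}$, and set $U_k \coloneqq W_{T-k}$. For each such $k$, define
\begin{equation*}
\mathcal{F}_k \;\coloneqq\; \sigma\bigl(\{Z_1,\ldots,Z_{T-k}\},\; Z_{T-k+1},\ldots,Z_T\bigr),
\end{equation*}
where the first argument is the unordered multiset and the remaining entries are the individual points. Passing from $\mathcal{F}_{k-1}$ to $\mathcal{F}_k$ amounts to \emph{singling out} one additional element of the set, so $(\mathcal{F}_k)_k$ is an increasing filtration, and $U_k$ is $\mathcal{F}_k$-measurable by \cref{as:perm_inv}. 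By exchangeability of i.i.d.\ data, the argument in the previous paragraph yields $\mathbb{E}[U_k \mid \mathcal{F}_{k-1}] \leq M_{T-k+1}/(T-k+1)$.

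\textbf{Conclusion.} Applying \eqref{eq:chernoffish1} to $(U_k)_{k=1}^{3T/\splita}$ with filtration $(\mathcal{F}_k)$, and using monotonicity of $M_n$ (part of \cref{as:bdd_loo}) to pull $M_T$ out of the sum, gives with probability at least $1-\delta$
\begin{equation*}
\sum_{t=T/\splita}^{T-1} W_t \;\leq\; \frac{e^\lambda-1}{\lambda}\, M_T \sum_{j = T/\splita + 1}^{T} \frac{1}{j} + \frac{\log(1/\delta)}{\lambda}.
\end{equation*}
A routine integral estimate shows $\sum_{j=T/\splita+1}^T 1/j \leq \ln \splita \leq \ln \splita + 1/2$, yielding the claimed bound.

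\textbf{Main difficulty.} The only substantive step is organizing the filtration so that $U_k$ is $\mathcal{F}_k$-measurable \emph{and} its conditional mean given $\mathcal{F}_{k-1}$ is a uniform average of leave-one-out losses over the full set $\{Z_1,\ldots,Z_{T-k+1}\}$. The reindexing $k = T - t$ converts what is naturally a ``reverse martingale / exchangeable filtration'' argument into the forward form required by \cref{lem:chernoff-ish}; once the measurability and conditional-expectation claims are in hand, the remaining harmonic-sum estimate and the Chernoff application are mechanical.
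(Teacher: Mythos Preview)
Your proposal is correct and follows essentially the same approach as the paper: reindex to run time backwards, use the exchangeable filtration $\mathcal{F}_k=\sigma(\{Z_1,\ldots,Z_{T-k}\},Z_{T-k+1},\ldots,Z_T)$, control the conditional expectation via \cref{as:bdd_loo}, and apply \eqref{eq:chernoffish1} followed by the harmonic-sum estimate. The paper's only presentational difference is that it introduces an auxiliary uniform permutation $\pi$ (independent of the data) so that the conditional-expectation step becomes a purely combinatorial leave-one-out average over fixed values; this is equivalent to your direct appeal to i.i.d.\ exchangeability.
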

\begin{proof}  
    Let $I_t = \ell(\widehat{f}(X_{T-t+1};\Slt[T-t]),Y_{T-t+1})$. 
    Note that $I_t$ is a function of $Z_1, \dots , Z_{T-t+1} $ and when required we will make this dependence explicit by writing $I_t( Z_1 \dots , Z_{T-t+1} )$. 
    Further, observe that since $Z_1 \dots , Z_{T-t+1} $ are exchangeable random variables, we have that the joint distribution of $ Z_{\pi(1)} \dots , Z_{\pi(T)}$ does not depend on the permutation $\pi$. 
    Consider a {uniformly} random permutation $\pi$ independent of $Z_1, \dots , Z_T$. 
    That is, throughout the proof we may assume that the values $Z_1, \dots , Z_T$ are fixed.
    Consider the filtration 
    \begin{align*}
        \mathcal{F}_t = \sigma\left( \left\{  Z_{\pi(1)} , \dots ,  Z_{ \pi(T-t) }  \right\} ,  Z_{\pi(T-t+1 )}, \dots ,  Z_{ \pi(T)}  \right).
    \end{align*} 
    Here, $\left\{  Z_{\pi(1)} , \dots ,  Z_{ \pi(T-t) }  \right\} $ denotes the set of values taken by $  Z_{\pi(1)} , \dots ,  Z_{ \pi(T-t) }   $ (excluding their order) and $ \sigma\left( \cdot \right)$ denotes the smallest $\sigma$-algebra with respect to which the arguments are measurable.
    Let $S_\pi = ((X_{\pi (1)}, Y_{\pi (1)}),\dots, (X_{\pi (n)}, Y_{\pi (n)}))$ denote the permutation of the training sample with respect to $\pi$. Define 
    \begin{align*}
        I_t' =  I_t(   Z_{\pi(1)} , \dots ,  Z_{ \pi(T-t+1 ) }   ) =  \ell(\widehat{f}(X_{\pi(T-t+1)}; (S_{\pi})_{\le T-t}) , Y_{ \pi(T-t+1) }). 
    \end{align*}
     Note that $I'_t$ is adapted to the filtration $(\mathcal{F}_t)_{t = 1}^T$ since, by \cref{as:perm_inv},
     $ \widehat{f} $
     is permutation invariant on the training sample $(S_{ \pi } )_{\leq T-t }$. 
     Applying \eqref{eq:chernoffish1} gives us that with probability at least $1-\delta$, we have
    \begin{align}
        \sum_{t=1}^{3T/4} I'_t \leq \frac{e^{\lambda} - 1 }{\lambda} \sum_{t= 1}^{3T/4} \mathbb{E} \left[  I'_t | \mathcal{F}_{t-1}  \right] + \frac{ \log(1/ \delta) }{\lambda}. \label{eq:mistaketail} 
    \end{align}
   To use this bound, we need to control
      \begin{align}
        \mathbb{E} \left[  I'_t | \mathcal{F}_{t-1}  \right] &= \mathbb{E} \left[  I( Z_{\pi(1)} , \dots , Z_{ \pi(T-t+1) }   ) | \mathcal{F}_{t-1} \right]   \label{eq:expec}. 
    \end{align}
    Notice that at time $t-1$ (with respect to the filtration $(\mathcal{F}_t)_{t = 1}^T$) the random indices $\pi(1), \dots, \pi(T-t+1)$ correspond to a uniformly random permutation of the elements of the set $[T] \setminus \{\pi (i)\}_{i = T-t+2}^{T}$. A standard computation relates averaging over uniformly random permutations to the leave-one-out error from which we can conclude together with \eqref{eq:expec} and \cref{as:bdd_loo} the bound
    \[
    \mathbb{E} \left[  I'_t | \mathcal{F}_{t-1}  \right] \leq \frac{M_{T-t+1}}{T-t+1}.
    \] 
    Plugging the above into \eqref{eq:mistaketail} along with the exchangeability of $Z_1, \dots , Z_T$ gives us 
    \begin{align*}
        \sum_{t =T/\splita }^{T-1} \ell\left(\widehat{f}(X_{t+1} ; \Slt),Y_{t+1}\right) &\leq \frac{e^\lambda -1}{\lambda} \sum_{t=1}^{3T/4} \frac{M_{T-t+1}}{T-t+1} + \frac{\log(1/\delta)}{\lambda} \\ 
        &\le \frac{e^\lambda -1}{\lambda} M_{T} \sum_{t=T/4 + 1}^T \frac{1}{i } + \frac{\log(1/\delta)}{\lambda} \numberthis \label{eq:mono} \\ 
        & \leq \frac{e^\lambda -1}{\lambda} M_{T} \left( \ln 4 + \frac{1}{2} \right)   + \frac{\log(1/\delta)}{\lambda}.   \numberthis \label{eq:harmonic}
     \end{align*} 
     The inequality \eqref{eq:mono} follows from the monotonicity of the  function $M_t$.  Finally,
     \eqref{eq:harmonic} follows from a simple approximation to the harmonic series considered.
 \end{proof}
We are now ready to prove our main result by piecing together the two results above.
We restate the theorem for convenience.
\main*
\begin{proof}[Proof of \cref{thm:main}]
Using \cref{lem:forward,lem:reverse} with confidence parameter $\delta/2$ together with a union bound implies, with probability at least $1-\delta$ over the randomness of $S$, that
\begin{align*}
\frac{\splita}{\splitb n}\sum_{t=n/\splita}^{n-1} \risk{\widehat{f}(\cdot ; \Slt)}{P} &\leq \frac{\splita}{\splitb n}\left(\frac{\eta e^{\eta}}{e^{\eta}-1} \left((\ln(\splita)+1/2)\frac{e^\lambda - 1}{\lambda}M_{n} + \frac{\log(1/\delta) }{\lambda}\right) + \frac{e^{\eta}\log(2/\delta)}{e^{\eta}-1} \right).
\end{align*}
Setting the parameters\footnote{The choices of the parameters have been optimized to get a small constant.} to $\lambda = 0.82$ and $\eta = 0.78$ followed by a simple calculation gives us the upper bound
\[
\frac{\splita}{\splitb n}\sum_{t=n/\splita}^{n-1} \risk{\widehat{f}(\cdot ; \Slt)}{P} \leq \cishaq\left( \frac{M_n}{n} + \frac{1}{n}\log \left(\frac{2}{\delta}\right)  \right).
\]
This completes the proof.
\end{proof}

\paragraph{Acknowledgments.} The authors gratefully acknowledge Peter Bartlett and Phil Long for useful discussions on the bounded regression setting. IA and AS would like to thank Prasad Raghavendra for offering a course where this work began as a final project. 

\bibliographystyle{alpha}
\bibliography{refs}

\appendix 

\section{Proofs from \cref{ssec:app_classification}: Classification}\label{app:oig} 

In this section we provide relevant details on the one-inclusion hypergraph algorithm \cite{haussler1994predicting,rubinstein2009shifting}. 
We also provide proofs for the results claimed in \cref{ssec:app_classification}.

We begin with some preliminary definitions for hypergraphs.
We define an \emph{orientation} of hypergraph $G = (V,E)$ to be a function $\sigma_{G} : E \to V$ such that for any $e \in E$, $\sigma_{G}(e) \in e$.
Given a hypergraph $G = (V,E)$ and orientation $\sigma_G$, define the \emph{out-degree} of a vertex $v \in V$ to be
\[
\out(v;\sigma_G) = |\{e \in E : v \in e, \sigma_{G}(e) \not= v\}|.
\]
For a hypergraph $G$ oriented using $\sigma_G$, the \emph{max out-degree} of $G$ is defined to be $\out(\sigma_G) = \max_{v \in V} \out(v;\sigma_G)$.

We are now ready to describe the one-inclusion hypergraph algorithm as pseudocode in~\cref{alg:OIG}.

\begin{algorithm}[H]
\caption{One-inclusion hypergraph algorithm.}
\label{alg:OIG}
\textbf{Inputs:} Training sample $S$.\\
\textbf{Output:} Hypothesis $\Oig{S} : \mc{X} \to \mc{Y}$.
\vspace{1em}

For any point $x \in \mc{X}$, $\Oig{S}$ predicts as follows:\hspace{-5em}
\vspace{1em}
\begin{algorithmic}[1] 
    \STATE Build the hypergraph $G(\mc{H}|_{\uniS \cup \{x\}})$ and pick an orientation $\sigma_{\uniS \cup \{x\}}$ \emph{minimizing} $\out(\sigma_{\uniS \cup \{x\}})$.\hspace{-1em}
    \vspace{0.5em}
    \STATE If there is a unique label $y$ for $x$ consistent with $S$ and $\mc{H}$, predict $y$.\hspace{-1em}
    \vspace{0.5em}
    \STATE Otherwise, let $e$ be the edge in $G(\mc{H}|_{\uniS \cup \{x\}})$ with hypotheses consistent with $S$ but not on $x$.
    \vspace{0.5em}
    \STATE Predict according to $ \sigma_{\uniS \cup\{x\}} (e)$, i.e.,\ the hypothesis pointed to in the orientation of $e$.
\end{algorithmic}
\end{algorithm}
The one-inclusion (hyper)graph algorithm \cite{haussler1994predicting,rubinstein2009shifting} was originally defined for the \emph{transductive} model of learning, i.e., it predicts a label for a single test point $x$.
The algorithm as presented above is the standard extension that defines a predictor for the entire domain $\mc{X}$.
When predicting on a test point $x$, the algorithm minimizes $\out(G(\mc{H}|_{\uniS \cup \{x\}}))$ which upper bounds its leave-one-out error (\cref{lem:oig_loo}).
Before we explore the leave-one-out error of this algorithm, we connect $\out(G(\mc{H}|_{\uniS \cup \{x\}}))$ to $\dens$. 

Every hypergraph $G$ has an orientation $\sigma_G$ for which the max out-degree $\out(\sigma_G)$ is at most $ \lceil \mu(G) \rceil$.
This is a standard fact for graphs (e.g., see~\cite[Lemma 3.1]{alon1992colorings}, \cite[Theorem 2.2]{haussler1994predicting}).
We expect the generalized statement for hypergraphs to also be known, but since we could not find an explicit reference, we include a proof based on a maximum flow argument generalizing a known proof for the graph case \cite{haussler1994predicting}.
\begin{lemma}\label{lem:orientatingdensity}
For any hypergraph $G = (V,E)$ and any integer $d \ge \mu(G)$, there is an orientation $\sigma_G$ such that the max out-degree satisfies $\out(\sigma_G) \le d$.
\end{lemma}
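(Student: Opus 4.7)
The plan is to cast the existence of a low-out-degree orientation as an integer flow problem, generalizing the classical graph argument alluded to before the lemma. I would construct a flow network with a source $s$, a sink $t$, a node $u_e$ for each hyperedge $e \in E$, and a node $w_v$ for each vertex $v \in V$. Include arcs $s \to u_e$ with capacity $|e|-1$, arcs $u_e \to w_v$ with capacity $1$ for every $v \in e$, and arcs $w_v \to t$ with capacity $d$. An integral $s$-$t$ flow of value $\sum_{e \in E}(|e|-1)$ saturates every source arc, so for each $e$ exactly one vertex $v \in e$ receives no unit from $u_e$; defining $\sigma_G(e) := v$ produces an orientation, and the total flow through $w_v$ equals $\out(v;\sigma_G)$, which is at most $d$ by the capacity of $w_v \to t$. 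Since all capacities are integral, the integrality theorem for max-flow reduces the task to producing a (possibly fractional) flow of the required value.

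By the max-flow min-cut theorem, it suffices to show that every $s$-$t$ cut has capacity at least $\sum_{e \in E}(|e|-1)$. Fix such a cut; let $A \subseteq E$ denote the edges with $u_e$ on the sink side and $B \subseteq V$ the vertices with $w_v$ on the source side. Tallying the three families of cut arcs, the cut-capacity inequality
\begin{equation*}
    \sum_{e \in A}(|e|-1) \;+\; d\,|B| \;+\; \sum_{e \notin A}|e \setminus B| \;\geq\; \sum_{e \in E}(|e|-1)
\end{equation*}
rearranges, after cancellation of the $\sum_{e \in A}(|e|-1)$ terms and using $|e \setminus B| = |e| - |e \cap B|$, to
\begin{equation*}
    d\,|B| \;\geq\; \sum_{e \in E \setminus A}\bigl(|e \cap B|-1\bigr).
\end{equation*}

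The right-hand side is maximized over choices of $A$ by including in $E \setminus A$ exactly the edges with $|e \cap B| \ge 1$ (edges with $|e \cap B|=0$ contribute $-1$ and should be placed in $A$), so it suffices to verify
\begin{equation*}
    d\,|B| \;\geq\; \sum_{e \in E,\ |e \cap B| \geq 1}\bigl(|e \cap B|-1\bigr) \;=\; |B|\cdot\operatorname{Dens}(G[B]),
\end{equation*}
which is immediate from the hypothesis $d \geq \mu(G) \geq \operatorname{Dens}(G[B])$. I expect the main obstacle to be the cut accounting itself: one must carefully track which of the three families of arcs contribute to an arbitrary $s$-$t$ cut and manipulate the resulting expression into a density statement on the induced hypergraph $G[B]$. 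Once this bookkeeping is in place, the conclusion follows directly from the definition of $\mu(G)$.
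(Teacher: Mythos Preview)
Your proposal is correct and follows essentially the same approach as the paper: the identical flow network is constructed, integrality plus max-flow/min-cut is invoked, and the min-cut lower bound is reduced to the density inequality $\operatorname{Dens}(G[B]) \le \mu(G) \le d$. Your cut accounting via the parametrization $(A,B)$ is in fact somewhat cleaner than the paper's, which carries out an equivalent but more elaborate decomposition of the middle-layer arcs.
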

\begin{proof}
We build an auxilary graph that contains a source $s$, a sink $t$, and two vertex sets $V_1$ and $V_2$ between $s$ and $t$.
$V_1$ contains a vertex for each edge in $E$, i.e.,\ $V_1 = E$. 
$V_2$ is equal to the vertex set of $G$, i.e.,\ $V_2 = V$.
There is a weighted and directed edge from $s$ to every $e \in V_1$.
For each $e \in V_1$, the weight of the edge to $e$ from $s$ is equal to $|e|-1$.
For every $e \in V_1$, there is a weighted directed edge from $e$ to every $v \in V_1$ that participates in the hyperedge $e$ in $G$.
Each such edge has weight $1$.
Finally, there is an edge from every $v \in V_2$ to $t$, each with weight $d$.
We will prove that the maximum flow of this graph from $s$ to $t$ is equal to $m \coloneqq \sum_{e \in V_1} |e|-1$.

Since all the weights are integers, there exists an integral maximum flow.
Consider such a flow.
There is a simple way to obtain the desired orientation of $G$ from this flow.
Fix an $e \in V_1$.
Since $e$ gets $|e|-1$ units of flow, $|e|-1$ of its $|e|$ neighbours gets a unit of flow.
Thus, there is a single neighbour $v \in V_2$ of $e$ that does not get a unit of flow, and we orient the edge $e$ in the original hypergraph $G$ towards this $v$.
Repeat this for every $e \in V_1$.
Every unit of flow some $v \in V_2$ recieves corresponds to an edge in $G$ that it is not selected to be the head of. Since the capacity of the edge out of every $v \in V_2$ to $t$ is $ d $, we can conclude that the orientation satisfies that the out-degree of every $v \in V$ is at most $d$.

We now prove the maximum flow is $m$.
It is easy to see the flow is at most $m$ since the sum of the edge weights from $s$ to $V_1$ is equal to $m$.
It remains to prove the maximum flow is at least $m$.
Recall that the maximum flow is equal to the minimum cut (set of edges) that separates $s$ from $t$, so it suffices to prove a lower bound for the minimum cut.

The following is a useful observation we will make use of.
Notice that every path from $s$ to $t$ consists of 3 edges passing through some $e \in V_1$ and $v \in V_2$.
Fix one such path.
Since we consider a minimum cut, if the first edge (connecting $s$ to $e$) is cut, there is no need to cut the edge between $e$ and $v$ since this would only increase the cut value. 
Similarly, if the third edge is cut (connecting $v$ to $t$) there is no need to cut the edge between $e$ and $v$.

Let $U \subseteq V_1$ be the set of vertices whose edge from $s$ \emph{are not} cut.
Let $W \subseteq V_2$ be the set of vertices whose edges to $t$ \emph{are} cut.
Fix an $e \in U$.
There are $|e|$ paths from $e$ to $t$ and each of these must be cut in either the middle layer of the edges or the third layer of edges, but not both.
Let $W_{e,2} \subseteq V_2$ be the set of vertices for which the paths from $e$ to $t$ are cut in the second layer and let $W_{e,3} \subseteq V_2$ be the set of vertices for which the paths from $e$ to $t$ are cut in the third layer. Clearly $W_{e,3} \subseteq W$.
Since we consider the minimum cut, we can conclude that $W_{e,3} \cap W_{e,2} = \emptyset$ from our earlier observation. 

We now upper bound the sum of the weight of the edges from $s$ to $U$.
Write this sum as $a = \sum_{e \in U} |e| -1$ which can be written $\sum_{e \in U} |e \cap W_{e,3}|+|e \cap W_{e,2}| -1$.
Let $W' = \cup_{e \in U} W_{e,3}$ and notice that 
\begin{align*}
\sum_{e \in U} |e \cap W'| - 1 &\le \sum_{e \in V_1} |e \cap W'| - 1 \le \mu(G)|W'| \le d |W'|.
\end{align*}
Thus, $a \le d|W'| + \sum_{e \in V_1} |e\cap W_{e,2}|$, so the sum of the weights of edges in the first layer that \emph{are} cut is at least $m - a \ge m - d|W'| - \sum_{e \in V_1}|e\cap W_{e,2}|$.
Since the weight of every edge in the second layer is $1$, the sum of weights of edges in the second layer that are cut can be written as $\sum_{e \in U}|e\cap W_{e,2}|$ as a result of the earlier observation.
Finally, since every edge in the final layer has weight $d$, the total weight of the cut edges in the third layer is equal to $d|W|$.
Since $W' \subseteq W$ we can conclude that the total weight of cut edges is at least 
\[
m - d|W'| - \sum_{e \in U}|e\cap W_{e,2}| +  d|W| + \sum_{e \in U} |e \cap W_{e,2}| \ge m.
\]
Thus, the maximum flow is indeed $m$.
\end{proof}
We can now use \cref{lem:orientatingdensity} to prove a bound on the LOO error of the one-inclusion graph algorithm.
\begin{lemma}\label{lem:oig_loo}
Fix a hypothesis class $\mc{H} \subseteq \mathcal{Y}^{\mathcal{X}}$.
The leave-one-out error of the one-inclusion graph algorithm $\widehat{f}_{\operatorname{OIG}}: \mc{X} \times \mc{U} \to \mc{Y}$ on \emph{any} realizable training sample $S = ((x_1, y_1), \dots, (x_{n},y_{n}))$ satisfies
\[
    \sum_{i = 1}^{n} \mathbf{1}\left\{\Oig[x_i]{\Si} \not = y_i \right\} \leq \lceil \dens \rceil.
\]
\end{lemma}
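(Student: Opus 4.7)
The plan is to reduce the leave-one-out error to the out-degree of a single vertex---the restriction of the realizing hypothesis---in one fixed orientation of one fixed hypergraph, and then invoke \cref{lem:orientatingdensity}. Writing $U_T$ for the set of unique $x$-values of a sample $T$, the first observation is that on input $(\Si, x_i)$ the algorithm of \cref{alg:OIG} builds the one-inclusion hypergraph on the set $U_{\Si} \cup \{x_i\}$; since $x_i \in \uniS$ always, we have $U_{\Si} \cup \{x_i\} = \uniS$ regardless of whether $x_i$ is unique in $S$. Under a deterministic tie-breaking rule in Step 1 that depends only on the hypergraph, the algorithm therefore uses the same orientation $\sigma^*$ of $G(\mc{H}|_{\uniS})$ on every query. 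Applying \cref{lem:orientatingdensity} with $d = \lceil \mu(G(\mc{H}|_{\uniS})) \rceil$ together with the definition of $\dens$ gives $\out(\sigma^*) \le \lceil \mu(G(\mc{H}|_{\uniS})) \rceil \le \lceil \dens \rceil$.

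Next I would split the indices by the multiplicity of $x_i$ in $S$. If $x_i$ appears at least twice in $S$, then $\Si$ still contains a copy of $(x_i, y_i)$; by realizability, $y_i$ is the unique label at $x_i$ consistent with both $\Si$ and $\mc{H}$, so Step 2 of the algorithm returns $y_i$ and no mistake is made on index $i$. Hence only indices $i$ for which $x_i$ is unique in $S$ can contribute to the LOO error. For such an index, fix any $f^* \in \mc{H}$ realizing $S$ and let $v^* = f^*|_{\uniS}$, which is a vertex of $G(\mc{H}|_{\uniS})$. The edge $e$ selected in Step 3 consists of all vertices agreeing with $v^*$ on $\uniS \setminus \{x_i\}$, so $v^* \in e$, and the algorithm outputs $\sigma^*(e)(x_i)$, which equals $y_i = v^*(x_i)$ whenever $\sigma^*(e) = v^*$. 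Summing over all such $i$, and noting that distinct choices of $x_i$ (which must be distinct here, since each is unique in $S$) produce distinct edges at $v^*$, the LOO error is at most the number of edges incident to $v^*$ that are oriented away from $v^*$ under $\sigma^*$, namely $\out(v^*; \sigma^*) \le \out(\sigma^*) \le \lceil \dens \rceil$.

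There is no substantive technical obstacle; the only conceptual subtlety is justifying the reuse of a single orientation across all $n$ LOO queries. This hinges on the identity $U_{\Si} \cup \{x_i\} = \uniS$, which makes the queried hypergraph independent of $i$, together with a mild consistent tie-breaking assumption in Step 1. Once this is in place, the bound collapses to the standard fact that the out-degree at any single vertex is at most the global maximum out-degree.
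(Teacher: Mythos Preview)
Your proposal is correct and follows essentially the same route as the paper: observe that the algorithm builds and orients the \emph{same} hypergraph $G(\mc{H}|_{\uniS})$ on every LOO query, identify the realizing vertex $v^*$, and bound the LOO error by $\out(v^*;\sigma^*)\le\lceil\dens\rceil$ via \cref{lem:orientatingdensity}. Your treatment is in fact slightly more careful than the paper's, which writes the identity $\sum_i \bm{1}\{\sigma_U(e_{i,f^*})\neq f^*\}=\out(f^*;\sigma_U)$ without explicitly handling repeated $x_i$'s; your split by multiplicity (repeated points fall into Step~2 and contribute zero) makes this step cleaner.
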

\begin{proof}
Let $\sub = \{x_1, \dots, x_n\}$, $G(\mc{H}|_{\sub}) = (V,E)$ be the one-inclusion hypergraph built using $\sub$, and $\sigma_{\sub}$ be an orientation that minimizes $\out(G(\mc{H}|_{\sub}))$.
Furthermore, let $f^* \in V$ be the hypothesis that labels all points in $U$ correctly, i.e.,\ $f^*(i) = y_i$ for all $i\in [n]$.
Notice that \cref{alg:OIG} builds the same one-inclusion hypergraph and orients it the same way for every choice of held out $x_i$.
So, the number of times the predictor makes a mistake is equal to the number of times it does not select $f^*$.
We have
    \[
    \sum_{i = 1}^{n} \mathbf{1}\left\{\Oig[x_i]{\Si} \not = y_i \right\} = \sum_{i = 1}^{n} \mathbf{1}\left\{\sigma_{\sub}(e_{i,f^*}) \not = f^* \right\} = \out(f^* ; \sigma_{U}) \le \lceil \dens \rceil,
    \]
where the inequality follows from \cref{lem:orientatingdensity}.
\end{proof}
For a set of hypotheses $\{f_1, \dots, f_t\} \subseteq \mc{H}$, we define their plurality vote $ \maj(f_1, \dots, f_t)$ to be the function that, on input $x \in \mc{X}$, outputs the label $y \in \mc{Y}$ that is most common out of $f_1(x), \dots, f_t(x)$.
We now prove our main multiclass PAC bound.
We restate \cref{thm:risk_multiclass} below before its proof for ease of reading.
\riskmulticlass*
\begin{proof}
Given the full training sample $S$, let $(\Oig{\Slt})_{t = 1}^{n-1}$ be the one-inclusion hypergraph predictors trained on the sequence of growing training samples $(\Slt)_{t=1}^{n-1}$. Because these predictors are symmetric in their training sample and they have a leave-one-out error guarantee (\cref{lem:oig_loo}), we can apply \cref{thm:main} to the suffix of the sequence to get that 
\[
\frac{4}{3n}\sum_{t=n/4}^{n-1}\err{\Oig{\Slt}}{P} \le \cishaq\left(\frac{\lceil\dens\rceil}{n}+\frac{1}{n}\log\left(\frac{2}{\delta}\right)\right),
\]
with probability at least $1-\delta$ over the randomness of $S$.
Whenever this bound holds, the plurality vote of the one-inclusion hypergraph predictors in the suffix, $\widehat{f}(\cdot) = \maj\left(\Oig{\Slt[\frac{n}{4}]}, \dots, \Oig{\Slt[n-1]}\right)$, satisfies
\begin{align*}
    \err{\widehat{f}}{P} &= \E_{(X,Y) \sim P}[\mathbf{1}\{\widehat{f}(X) \not= Y\}] 
    \\&\le \E_{(X,Y)\sim P}\left[2\left(\frac{4}{3n}\sum_{t=n/4}^{n-1}\mathbf{1}\{\Oig[X]{\Slt} \not= Y\}\right)\right]\\
    &= 2\cdot\frac{4}{3n}\sum_{t=n/4}^{n-1}\err{\Oig{\Slt}}{P}
    \\
    &\le \twocishaq\left(\frac{\lceil\dens\rceil}{n}+\frac{1}{n}\log\left(\frac{2}{\delta}\right)\right).
\end{align*}
In the first inequality above, we used the fact that no more than half the predictors can be correct when the majority vote is wrong.
This completes the proof.
\end{proof}

\subsection{Partial hypothesis classes}\label{app:oig_partial}
As mentioned in \cref{ssec:app_classification}, we can apply main multiclass risk bound in partial hypothesis classification by slightly modifying the one-inclusion hypergraph algorithm. 
We expand on this further.
Fix a partial hypothesis class $\mc{H} \subseteq \{0,1,\star\}^{\mc{X}}$.
If we were to follow the definition of the one-inclusion hypergraph, given any subset of the domain $\sub \subseteq \mc{X}$, we would first compute the projection $\mc{H}|_{\sub}$ and build the appropriate set of hyperedges.
For partial hypothesis classes, we will slightly change the definition of the one-inclusion hypergraph by removing every vertex (hypothesis) in the projection $\mc{H}|_{\sub}$ that labels a training point $x \in \sub$ with the $\star$ label, and then build the set of edges in the usual way from this truncated set of vertices.
This is motivated by the fact that an optimal hypothesis $f^* \in \mc{H}$ would never produce a $\star$ label on points sampled from a realizable distribution $P$.
It is easy to see that this change makes our one-inclusion hypergraph a graph.
We can also redefine the one-inclusion hypergraph density $\dens$ of $\mc{H}$ after making this change.
It now follows immediately from the arguments in the binary classification setting (\cref{thm:hlw_density}) that the one-inclusion hypergraph density $\dens$ of $\mc{H}$ is at most $d$, the VC dimension of $\mc{H}$.

\section{Proofs for \cref{ssec:app_regression}: Bounded regression}
\label{sec:proofs_bdd_regression_app}

\subsection{Proof of \cref{prop:ermperformance}} \label{ssec:ermperformance}

\ermperformance*

\begin{proof} Without loss of generality, we assume that in the realizable case, it holds that $Y = f^*(X)$ for some $f^* \in \mathcal H$. First, we use a standard ratio-type bound on empirical processes.
Theorem 19.7 in \cite{anthony1999neural} implies that for any $t \ge 0$,
\begin{equation}
\label{eq:uniformbound}
\Pr\left(\sup\limits_{f \in \mathcal H} \E_{X \thicksim P} \lsrs{\abs{{f}(X) - f^*(X)}} - \frac{2}{n}\sum\limits_{i = 1}^n|f(X_i) - f^*(X_i)| \ge t \right) \le 2\mathcal{N}_{1}\left(\frac{t}{12}, \mathcal{H}^{\prime}, 2n\right)\exp\left(-\frac{2nt}{9}\right),
\end{equation}
where $\mathcal{H}^{\prime} = \{x \mapsto |f(x) - f^*(x)|: f \in \mathcal H\}$.
For any distribution $P^{\prime}$ we have \[
\E_{X \thicksim P^{\prime}}\lsrs{\abs{\abs{f(X) - f^*(X)} - \abs{g(X) - f^*(X)}}} \le \E_{X \thicksim P^{\prime}}\lsrs{\abs{f(X) - g(X)}}.
\]
Thus, it is sufficient to bound $\mathcal{N}_{1}\left(\frac{t}{12}, \mathcal H, 2n\right)$ in terms of the $P_{\gamma}$-dimension. The result of Theorem 1.3 in \cite{rudelson2006combinatorics} does exactly this.\footnote{The result of Theorem 1.3 is formulated for $L_2$ distance, but according to the discussion on page 607 in \cite{rudelson2006combinatorics}, it also holds for all $L_p$, $1 \le p < \infty$ distances with appropriate changes in absolute constants. Similar generalizations are made in \cite{mendelson2003entropy}.} In fact, under the regularity assumption in our statement, this result implies
\[
\log 2\mathcal{N}_{1}\left(\frac{t}{12}, \mathcal H, 2n\right) \le c_4\fat{\mc{H}}(c_2t),
\]
where $c_2, c_4 > 0$ are absolute constants. Observe that $\fat{\mc{H}} (c_2\gamma)$ takes only integer values and decreases when $\gamma$ increases. Thus, one may find $\gamma^*\in[0, 1]$ such that $\fat{\mc{H}} (c_2\gamma^*) \le \frac{\gamma^*n}{9c_4}\le  \fat{\mc{H}} (c_2\gamma^*) + 1$. Fix $t = \gamma^*$. Using the definition of ERM and \eqref{eq:uniformbound} we have that, with probability at least $1 - \exp(-n\gamma^*/9)$,
\[
\E_{X \thicksim P} \lsrs{\abs{\widehat{f}_{\operatorname{ERM}}(X) - f^*(X)}} \le \gamma^*.
\]
If $n\gamma^*/9 \ge \log(1/\delta)$, then the above inequality holds with probability at least $1 - \delta$. Otherwise, we can instead consider $t = \frac{9}{n}\log(1/\delta)$, which satisfies $t \ge \gamma^*$, and thus by monotonicity $c_4\fat{\mc{H}} (c_2 t) \le  c_4\fat{\mc{H}} (c_2\gamma^*) \le \frac{\gamma^*n}{9} \le \log(1/\delta)$. This implies that whenever $n\gamma^*/9 < \log(1/\delta)$,
we have, with probability at least $1 - \delta$,
\[
\E_{X \thicksim P} \lsrs{\abs{\widehat{f}_{\operatorname{ERM}}(X) - f^*(X)}} \le \frac{9}{n}\log\left(\frac{1}{\delta}\right).
\]
Combining these bounds, we show that, with probability at least $1 - \delta$,
\[
\E_{X \thicksim P} \lsrs{\abs{\widehat{f}_{\operatorname{ERM}}(X) - f^*(X)}}  \le \gamma^* + \frac{9}{n}\log\left(\frac{1}{\delta}\right).
\]
Now take any $\gamma \in [0, 1]$. If $\gamma > \gamma^*$, then we obviously have $\gamma^* \le \gamma + \frac{\fat{\mc{H}} (c_2\gamma)}{n}$ and the claim follows. Otherwise, if $\gamma \le \gamma^*$, then $\gamma^* \le \frac{9c_4(\fat{\mc{H}} (c_2\gamma^*) + 1)}{n} \le \frac{9c_4(\fat{\mc{H}} (c_2\gamma) + 1)}{n}$. The claim follows by adjusting the constants.
\end{proof}

\subsection{Proof of \cref{lem:oiglo}}
\label{ssec:proof_oiglo}

Before going into details, we provide some definitions.
An alternative and equivalent definition of $\fatvf{\mathcal H}(\gamma)$, suggested in \cite{bartlett1998prediction}, naturally leads to the notion of partial hypothesis classes and facilitates connections with the one-inclusion graph algorithm. We define the thresholding operator for a fixed margin $\gamma$ and threshold $\tau$ as follows:

\begin{equation*}
    \psi_{\gamma, \tau} (z) = 
        \begin{cases}
            0 & \text{if } z \leq \tau - \gamma, \\
            1 & \text{if } z \geq \tau + \gamma, \\
            \star & \text{otherwise}.
        \end{cases}
\end{equation*}

We now formally reintroduce the complexity measure from this perspective. 
\begin{definition}
\label{def:l1comp}
For $\gamma > 0$, a sequence $(x_i)_{i = 1}^d \subseteq \mc{X}$ is $V_\gamma$-shattered by a function class $\mc{H}$ if there exists some $\tau \in [0, 1]$ such that
\begin{equation*}
\{0, 1\}^d \subseteq \lbrb{(\psi_{\gamma, \tau} (f(x_1)), \dots , \psi_{\gamma, \tau} (f(x_d))): f \in \mc{H}}.
\end{equation*}
We define $\fatvf{\mc{H}}(\gamma)$ as the length of the largest sequence $V_\gamma$-shattered shattered by $\mc{F}$.
\end{definition}

Contrasting with the partial hypothesis setup, we choose a specific loss function over the \emph{discrete} predictions $\{0, 1, \star\}$ that is insensitive to misclassification on $\star$. We define
\begin{equation}
\label{eq:bloss}
\ell_b (\widehat{y}, y) = \bm{1} \lbrb{\widehat{y} \neq y \text{ and } y \neq \star}.
\end{equation}
 We can now define the VC dimension of the partial hypothesis class $\mc{G} \subseteq \lbrb{0, 1, \star}^{\mc{X}}$ the same way as in \cref{ssec:app_classification}.
We recall the properties of a one-inclusion predictor from \cite{bartlett1998prediction}, which extends the classical one-inclusion graph algorithm of Haussler, Littlestone, and Warmuth \cite{haussler1994predicting}.

\begin{lemma}[\cite{bartlett1998prediction}]
\label{lem:oiglob}
Fix a hypothesis class $\mc{H} \subseteq \{0,1, \star \}^{\mc{X}}$ with VC dimension $d$. Let $f^* \in \mc{G}$, and consider the loss function \eqref{eq:bloss}. There exists a predictor $\widehat{f}$ such that, for any realizable sample $S = {(x_i, f^*(x_i))}_{i = 1}^n$, it satisfies \cref{as:perm_inv,as:bdd_loo} with $M_n \leq d$.
\end{lemma}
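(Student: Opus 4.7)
The plan is to verify the lemma using the adaptation of the one-inclusion graph algorithm described in \cref{app:oig_partial}. Given a test point $x \in \mc{X}$ and a realizable training sample $S$, I would set $U = \uniS \cup \{x\}$, project $\mc{H}$ onto $U$, and delete every projected hypothesis whose label on some element of $U$ is $\star$. The surviving vertex set $V'$ consists entirely of $\{0,1\}$-valued maps on $U$, so the one-inclusion structure $G'$ built from $V'$ is a genuine graph rather than a hypergraph. I would then apply \cref{lem:orientatingdensity} to obtain an orientation $\sigma$ of $G'$ with minimum maximum out-degree and predict on $x$ following \cref{alg:OIG}.

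Symmetry (\cref{as:perm_inv}) is immediate from the construction, since both the filtering and the graph depend only on the unordered set $\uniS$. For the leave-one-out guarantee, realizability forces $f^*(x_i) \in \{0,1\}$ for every $i \in [n]$, so $v^* \coloneqq f^*|_U$ survives the filtering and lies in $V'$. Because $G'$ depends only on $\uniS$ (and is unaffected by which single training index is held out in a given LOO round), the argument of \cref{lem:oig_loo} carries over verbatim and shows that the total number of LOO mistakes is at most the out-degree $\out(v^*; \sigma)$.

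To finish, I would bound $\out(v^*; \sigma) \le d$ by reducing to the binary density theorem \cref{thm:hlw_density}. Since $V'$ consists of binary functions on $U$, the graph $G'$ is precisely the one-inclusion graph of the binary hypothesis class $\mc{H}' \coloneqq V' \subseteq \{0,1\}^U$. Any set $T \subseteq U$ shattered by $\mc{H}'$ in the binary sense satisfies $\{0,1\}^T \subseteq \mc{H}'|_T \subseteq \mc{H}|_T$, so $T$ is partial-shattered by $\mc{H}$, giving $|T| \le d$. Thus $\mc{H}'$ has VC dimension at most $d$, so \cref{thm:hlw_density} implies $\mu(G') \le d$, and \cref{lem:orientatingdensity} yields an orientation with max out-degree $\le d$. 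This is exactly the required bound $M_n \le d$.

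The main point to be careful about is the reduction from partial to binary in the density step: I must ensure that the filtered subclass $\mc{H}'$ is a well-defined binary subclass of $\{0,1\}^U$ whose binary-shattered sets embed into the partial-shattered sets of $\mc{H}$. This holds because deletion only removes possibilities — any $\{0,1\}$-pattern still realized by $\mc{H}'|_T$ was already realized by $\mc{H}|_T$ — so the shattering inclusion is preserved and the VC bound transfers. With this in hand, the three required properties (symmetry, LOO bound, and the density reduction) combine to give the lemma.
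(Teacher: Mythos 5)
Your construction and the density reduction are essentially the ones the paper has in mind (and attributes to Bartlett and Long), but there is a genuine gap in the middle step: in \cref{lem:oiglob} the training sample is $S = ((x_i, f^*(x_i)))_{i=1}^n$ for a \emph{partial} hypothesis $f^*$, so the labels $y_i = f^*(x_i)$ may themselves equal $\star$. This is exactly the situation in which the lemma is invoked in the proof of \cref{lem:oiglo}, where the labels are $\psi_{\gamma,\tau_i}(f^*(x_j))$; it is also why the loss $\ell_b$ carries the clause ``$y \neq \star$''. Your claim that ``realizability forces $f^*(x_i) \in \{0,1\}$ for every $i$'' is therefore false here --- you have imported the realizability convention of \cref{cor:risk_partial}, where the distribution is supported on $\mc{X} \times \{0,1\}$, into a setting where it does not apply. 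Once some $y_i = \star$, your filtered vertex set $V'$ no longer contains $v^* = f^*|_U$, so the out-degree argument of \cref{lem:oig_loo} has no vertex against which to count mistakes; worse, no surviving hypothesis is ``consistent with $S$'' at a $\star$-labeled point, so step 3 of \cref{alg:OIG} would select an empty edge and the predictor is not even well defined.

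The repair is the one sketched informally after the lemma in the paper: before projecting and filtering, discard every training point whose label is $\star$. Such points contribute zero to the leave-one-out error under $\ell_b$, and on the remaining subsample $S'$ the target $f^*$ is genuinely $\{0,1\}$-valued, so it survives your filtering and your argument goes through verbatim on $S'$: the filtered class is a binary class of VC dimension at most $d$ by your shattering-inclusion observation, \cref{thm:hlw_density} and \cref{lem:orientatingdensity} give an orientation of maximum out-degree at most $d$, and the leave-one-out error of the resulting predictor on $S$ equals its leave-one-out error on $S'$, which is at most $d$. Symmetry is unaffected, since discarding $\star$-labeled points is itself a permutation-invariant operation. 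With this one modification your proof is correct.
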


Instead of delving into the details, we provide an informal explanation of how the predictor $\widehat{f}$ from \cref{lem:oiglob} operates. Upon observing a realizable sample of size $n - 1$, we focus solely on the indices where $f^*(x_i) \neq \star$. We project our class onto these indices only. Consequently, the problem is transformed into a $\{0, 1\}$-valued classification problem, and we predict the label of the $n$-th observation using the standard one-inclusion graph prediction strategy of Haussler, Littlestone, and Warmuth \cite{haussler1994predicting}. Note that our loss is always equal to zero if $y_n = \star$, so that we indeed restrict ourselves to a binary classification problem.

We now restate and prove \cref{lem:oiglo}. 
\oiglo*

\begin{proof}
    Define $m = \ceil{4n / \gamma}$ and thresholds $\tau_i = i\rho$ for $i \in [m]$ and $\rho = 1 / m$. Now, let $\widehat{g}_i$ denote the predictors satisfying the conditions of \cref{lem:oiglob} for the thresholded function classes $\mc{G}_i = \{\psi_{\gamma, \tau_i}(f): f \in \mc{H}\}$. Hence, for a realizable sample $S = ((x_i, f^*(x_i)))_{i = 1}^n$, we have for any $i \in [m]$,
    \begin{equation*}
        \sum_{j = 1}^n \ell_b (\widehat{g}_i(x_j; \Si[j]), \psi_{\gamma, \tau_i} (f^*(x_j))) \leq \fatvf{\mc{H}} (\gamma).
    \end{equation*}
    Our predictor is defined by
    \begin{equation*}
        \widehat{f}(x; S) = \rho \sum_{i = 1}^m \bm{1} \lbrb{\widehat{g}_i (x; S) = 1}.
    \end{equation*}
    Note that for any $z \in [0, 1]$, we have
    \begin{equation*}
        \rho \sum_{i = 1}^m \bm{1} \lbrb{z \geq \tau_i} \leq z \leq \rho + \rho \sum_{i = 1}^m \bm{1} \lbrb{z \geq \tau_i}
    \end{equation*}
    Now, fix $x \in \mc{X}$ and let $\mc{I} = \{i \in [m]: \psi_{\gamma, \tau_i} (f^* (x)) = \star\}$ and $i_l = \min\, \mc{I}$ and $i_h = \max\, \mc{I}$. Noting that for any $i \notin \mc{I}$, it holds that
    \begin{equation*}
        \bm{1} \lbrb{f^*(x) \geq \tau_i} = \bm{1} \lbrb{\psi_{\gamma, \tau_i} (f^*(x)) = 1} \text{ and } \mc{I} = \lbrb{i: i_l \leq i \leq i_h},
    \end{equation*}
    we have
    \begin{align*}
        f^*(x) - \wh{f} (x; S) &\leq \rho + \rho \left(\sum_{i = 1}^{i_l - 1} \bm{1} \lbrb{f^*(x) \geq \tau_i} - \bm{1} \lbrb{\wh{g}_i (x; S) = 1}\right) \\
        &\qquad + \rho \left(\sum_{i = i_l}^{i_h} \bm{1} \lbrb{f^*(x) \geq \tau_i} - \bm{1} \lbrb{\wh{g}_i (x; S) = 1}\right) \\
        &\qquad + \rho \left(\sum_{i = i_h + 1}^m \bm{1} \lbrb{f^*(x) \geq \tau_i} - \bm{1} \lbrb{\wh{g}_i (x; S) = 1}\right) \\
        &\leq \rho + \rho \sum_{i = 1}^m \bm{1} \lbrb{\wh{g}_i (x; S) \neq \psi_{\gamma, \tau_i} (f^*(x)) \text{ and } \psi_{\gamma, \tau_i} (f^*(x)) \neq \star} \\
        &\qquad + \rho \lprp{\sum_{i = i_l}^{i_h} \bm{1} \lbrb{f^*(x) \geq \tau_i} - \bm{1} \lbrb{\wh{g}_i (x) = 1}}  \\
        &\leq \rho + \rho \sum_{i = 1}^m \ell_b (\wh{g}_i (x; S), \psi_{\gamma, \tau_i} (f^*(x))) + \rho \sum_{i = i_l}^{i_h} \bm{1} \lbrb{f^*(x) \geq \tau_i} \\
        &\leq 2\rho + \gamma + \rho \sum_{i = 1}^m \ell_b (\wh{g}_i (x; S), \psi_{\gamma, \tau_i} (f^*(x))).  \end{align*}
    where the last inequality follows from the observation
    \begin{equation*}
        \tau_{i_l} \geq f^*(x) - \gamma \implies \sum_{i = i_l}^{i_h} \bm{1} \lbrb{f^*(x) \geq \tau_i} \leq \frac{\gamma}{\rho} + 1.
    \end{equation*}
    Similarly, for the lower bound, defining $i_m = \max \{i: f^*(x) \geq \tau_i\}$, we have
    \begin{align*}
        f^*(x) - \wh{f} (x; S) &\geq \rho \left(\sum_{i = 1}^{i_l - 1} \bm{1} \lbrb{f^*(x) \geq \tau_i} - \bm{1} \lbrb{\wh{g}_i (x; S) = 1}\right) \\
        &\qquad + \rho \left(\sum_{i = i_l}^{i_h} \bm{1} \lbrb{f^*(x) \geq \tau_i} - \bm{1} \lbrb{\wh{g}_i (x; S) = 1}\right) \\
        &\qquad + \rho \left(\sum_{i = i_h + 1}^m \bm{1} \lbrb{f^*(x) \geq \tau_i} - \bm{1} \lbrb{\wh{g}_i (x; S) = 1}\right) \\
        &\geq -\rho - \rho \sum_{i = 1}^m \bm{1} \lbrb{\wh{g}_i (x; S) \neq \psi_{\gamma, \tau_i} (f^*(x)) \text{ and } \psi_{\gamma, \tau_i} (f^*(x)) \neq \star} \\ 
        &\qquad + \rho \left(\sum_{i = i_l}^{i_h} \bm{1} \lbrb{f^*(x) \geq \tau_i} - \bm{1} \lbrb{\wh{g}_i (x) = 1}\right) \\
        &\geq - \rho - \rho \sum_{i = 1}^m \ell_b (\wh{g}_i (x; S), \psi_{\gamma, \tau_i} (f^*(x))) - \rho (i_h - i_m) \\
        &\geq -3\rho - \gamma - \rho \sum_{i = 1}^m \ell_b (\wh{g}_i (x; S), \psi_{\gamma, \tau_i} (f^*(x))),
    \end{align*}
    where the last inequality follows from
    \begin{equation*}
        \tau_{i_h} - \tau_{i_m} \leq \gamma + \rho \implies i_h - i_m \leq \frac{\gamma}{\rho} + 2.
    \end{equation*}
    From the above two bounds and \cref{lem:oiglob}, we have
    \begin{equation*}
        \sum_{i = 1}^n \ell (\widehat{f}(x; \Si), f^* (x_i)) \leq n \gamma + 3n\rho + \rho m \fatvf{\mc{H}} (\gamma) \leq (n + 1) \gamma + \fatvf{\mc{H}} (\gamma).
    \end{equation*}
    This concludes the proof.
\end{proof}

\subsection{Proof of \cref{thm:loreg}}
\label{ssec:proof_loreg}

\loreg*
\begin{proof}
Without loss of generality, we assume that in the realizable case, it holds that $Y = f^*(X)$ for some $f^* \in \mathcal H$.
Using the convexity of the absolute loss, \cref{thm:main} and \cref{lem:oiglo}, we have, with probability at least $1 - \delta$,
\begin{align*} 
\E_{X} \lsrs{\Bigl|\frac{4}{3n}\sum\nolimits_{t = n/4}^{n - 1}\widehat{f}(X; \Slt{}) - f^*(X)\Bigr|}
&\leq \frac{4}{3n}\sum\nolimits_{t = n/4}^{n - 1}\E_{X} \lsrs{\abs{\widehat{f}(X; \Slt{}) - f^*(X)}}
\\
&\leq \cishaq \lprp{\gamma + \frac{\gamma}{n} + \frac{\fatvf{\mc{H}} (\gamma)}{n} + \frac{1}{n}\log \left(\frac{2}{\delta}\right)}.
\end{align*}
The claim follows if we choose the predictor $\widehat{f}(\cdot) = \frac{4}{3n}\sum\nolimits_{t = n/4}^{n - 1}\widehat{f}(\cdot; \Slt{})$ and note that $\gamma < 1$.
\end{proof}

\section{Proofs for \cref{sec:proof_main_result}: Main result}
\label{sec:proofs_main_result_app}

\subsection{Proof of \cref{lem:chernoff-ish}: Martingale concentration inequalities} 
\label{ssec:app_martingale}

\chernoffish*
\begin{proof}
We first prove \eqref{eq:chernoffish1}.
Fix $\lambda \in (0, 1]$.
We have
\begin{align*}
\E\left[e^{\lambda W_t} \mid \mc{F}_{t-1}\right] &\le \E\left[1 + (e^\lambda-1)W_t  \mid \mc{F}_{t-1}\right] \numberthis{} \label{eq:chernoff1} \\ 
&= 1 +  (e^\lambda-1)\E\left[ W_t \mid \mc{F}_{t-1}\right] \\
&\le \text{exp}((e^\lambda-1)\E[W_t \mid \mc{F}_{t-1}]) \numberthis{} \label{eq:chernoff3},
\end{align*}
where we used the facts that $e^{\lambda x} \le 1 + (e^\lambda-1)x$ for any $0 \le x \le 1$ in \eqref{eq:chernoff1}, and  $1+x \le e^x$ for any $x$ in \eqref{eq:chernoff3}.
Set $\alpha = (e^\lambda-1)$.
We can now easily conclude that
\begin{align*}
\E\left[\prod_{t=1}^T e^{\lambda W_t - \alpha\E[W_t \mid \mc{F}_{t-1}]}\right] &= \E\left[\E\left[\prod_{t=1}^T e^{\lambda W_t - \alpha \E[W_t \mid \mc{F}_{t-1}]} \;\middle|\; \mc{F}_{T-1}\right] \right]\\
&= \E\left[\prod_{t=1}^{T-1}e^{\lambda W_t - \alpha \E[W_t \mid \mc{F}_{t-1}]} \cdot \E\left[e^{\lambda W_T - \alpha \E[W_T \mid \mc{F}_{T-1}]} \;\middle|\; \mc{F}_{T-1}\right]\right]\\
&=\E\left[\prod_{t=1}^{T-1}e^{\lambda W_t - \alpha \E[W_t \mid \mc{F}_{t-1}]} \cdot e^{- \alpha \E[W_T \mid \mc{F}_{T-1}]}\E\left[e^{\lambda W_T } \;\middle|\; \mc{F}_{T-1}\right] \right]\\
&\le\E\left[\prod_{t=1}^{T-1}e^{\lambda W_t - \alpha \E[W_t \mid \mc{F}_{t-1}]} \right] && \text{(using \eqref{eq:chernoff3})}\\
&\le\E\left[e^{\lambda W_1 - \alpha\E[W_1 \mid \mc{F}_{0}]} \cdot 1 \right]  && \text{(using induction)}  \\ 
&\le 1.
\end{align*}
We can thus apply Markov's inequality on the random variable $\prod_{t=1}^T e^{\lambda W_t - \alpha \E[W_t \mid \mc{F}_{t-1}]}$ to conclude that
\begin{align*}
    \Pr\left[ \sum_{t=1}^T \lambda W_t - \alpha\E[W_t \mid \mc{F}_{t-1}] \ge \log(1/\delta)\right] \le \delta.
\end{align*}
\eqref{eq:chernoffish1} now follows after rearranging.

We now prove \eqref{eq:chernoffish2}. 
Fix $\eta \in (0,1]$.
We have
\begin{align*}
\E\left[e^{-\eta W_t} \mid \mc{F}_{t-1}\right] &\le \E\left[1 - (1-e^{-\eta})W_t  \mid \mc{F}_{t-1}\right] \numberthis \label{eq:chernoff4} \\ 
&= 1 -  (1-e^{-\eta})\E\left[ W_t \mid \mc{F}_{t-1}\right] \\
&\le \text{exp}(-(1-e^{-\eta})\E[W_t \mid \mc{F}_{t-1}])\numberthis \label{eq:chernoff6},
\end{align*}
where we used the facts that $e^{-\eta x} \le 1 - (1-e^{-\eta})x$ for any $0 \le x \le 1$ in \eqref{eq:chernoff4}, and  $1+x \le e^x$ for any $x$ in \eqref{eq:chernoff6}.
Set $\beta = (1-e^{-\eta})$.
We can now conclude that
\begin{align*}
\E\left[\prod_{t=1}^T e^{-\eta W_t + \beta\E[W_t \mid \mc{F}_{t-1}]}\right] &= \E\left[\E\left[\prod_{t=1}^T e^{-\eta W_t + \beta \E[W_t \mid \mc{F}_{t-1}]} \;\middle|\; \mc{F}_{T-1}\right] \right]\\
&= \E\left[\prod_{t=1}^{T-1}e^{-\eta W_t + \beta \E[W_t \mid \mc{F}_{t-1}]} \cdot \E\left[e^{-\eta W_T + \beta \E[W_T \mid \mc{F}_{T-1}]} \;\middle|\; \mc{F}_{T-1}\right]\right]\\
&=\E\left[\prod_{t=1}^{T-1}e^{-\eta W_t + \beta \E[W_t \mid \mc{F}_{t-1}]} \cdot e^{\beta \E[W_T \mid \mc{F}_{T-1}]}\E\left[e^{-\eta W_T } \;\middle|\; \mc{F}_{T-1}\right] \right]\\
&\le\E\left[\prod_{t=1}^{T-1}e^{-\eta W_t + \beta \E[W_t \mid \mc{F}_{t-1}]} \right] && \text{(using \eqref{eq:chernoff6})}\\
&\le\E\left[e^{-\eta W_1 + \beta\E[W_1 \mid \mc{F}_{0}]} \cdot 1 \right] && \text{(using induction)}  \\ 
&\le 1.
\end{align*}
We can thus apply Markov's inequality on the random variable $\prod_{t=1}^T e^{-\eta W_t + \beta \E[W_t \mid \mc{F}_{t-1}]}$ to conclude that
\begin{align*}
    \Pr\left[ -\sum_{t=1}^T \eta W_t + \beta\E[W_t \mid \mc{F}_{t-1}] \ge \log(1/\delta)\right] \le \delta.
\end{align*}
\eqref{eq:chernoffish2} now follows after rearranging.
\end{proof}
\end{document}